\newcolumntype{L}[1]{>{\raggedright\let\newline\\\arraybackslash\hspace{0pt}}m{#1}}
\newcolumntype{C}[1]{>{\centering\let\newline\\\arraybackslash\hspace{0pt}}m{#1}}
\newcolumntype{R}[1]{>{\raggedleft\let\newline\\\arraybackslash\hspace{0pt}}m{#1}}
\definecolor{myGreen}{HTML}{3ba9db}
\definecolor{myPink}{HTML}{E7DBCC}
\definecolor{myBlue}{HTML}{5999B6}
\definecolor{myYellow}{HTML}{F1F4AE}
\newcommand*{\algrule}[1][\algorithmicindent]{\makebox[#1][l]{\hspace*{.5em}\thealgruleextra\vrule height \thealgruleheight depth \thealgruledepth}}%
\newcommand*{\thealgruleextra}{}
\newcommand*{\thealgruleheight}{.75\baselineskip}
\newcommand*{\thealgruledepth}{.25\baselineskip}
\def\ALG@printindent{%
	\ifnum \theALG@nested>0
	\ifx\ALG@text\ALG@x@notext
	\else
	\unskip
	\addvspace{-1pt}
	\ALG@printindent@tempcnta=1
	\loop
	\algrule[\csname ALG@ind@\the\ALG@printindent@tempcnta\endcsname]%
	\advance \ALG@printindent@tempcnta 1
	\ifnum \ALG@printindent@tempcnta<\numexpr\theALG@nested+1\relax
	\repeat
	\fi
	\fi
}%
\patchcmd{\ALG@doentity}{\noindent\hskip\ALG@tlm}{\ALG@printindent}{}{\errmessage{failed to patch}}
\newbox\statebox
\newcommand{\myState}[1]{%
	\setbox\statebox=\vbox{#1}%
	\edef\thealgruleheight{\dimexpr \the\ht\statebox+1pt\relax}%
	\edef\thealgruledepth{\dimexpr \the\dp\statebox+1pt\relax}%
	\ifdim\thealgruleheight<.75\baselineskip
	\def\thealgruleheight{\dimexpr .75\baselineskip+1pt\relax}%
	\fi
	\ifdim\thealgruledepth<.25\baselineskip
	\def\thealgruledepth{\dimexpr .25\baselineskip+1pt\relax}%
	\fi
	\State #1%
	\def\thealgruleheight{\dimexpr .75\baselineskip+1pt\relax}%
	\def\thealgruledepth{\dimexpr .25\baselineskip+1pt\relax}%
}
\title{CHASE: Learning Convex Hull Adaptive Shift for Skeleton-based Multi-Entity Action Recognition}
\author{%
  Yuhang Wen \\
  Sun Yat-sen University\\
  \texttt{wenyh29@mail2.sysu.edu.cn} \\
  \And
  Mengyuan Liu\thanks{Corresponding Authors.} \\
  State Key Laboratory of General Artificial Intelligence\\
  Peking University, Shenzhen Graduate School\\
  \texttt{nkliuyifang@gmail.com} \\
  \AND
  Songtao Wu \\
  Sony R\&D Center China \\
  \texttt{Songtao.Wu@sony.com} \\
  \And
  Beichen Ding\footnotemark[1] \\
  Sun Yat-sen University \\
  \texttt{dingbch@mail.sysu.edu.cn} \\
}
\begin{document}

\newtheorem{lem}{Lemma}
\newtheorem{prop}{Proposition}
\theoremstyle{definition}
\newtheorem{definition}{Definition}

\maketitle

\begin{abstract}
  Skeleton-based multi-entity action recognition is a challenging task aiming to identify interactive actions or group activities involving multiple diverse entities. Existing models for individuals often fall short in this task due to the inherent distribution discrepancies among entity skeletons, leading to suboptimal backbone optimization. To this end, we introduce a Convex Hull Adaptive Shift based multi-Entity action recognition method (CHASE), which mitigates inter-entity distribution gaps and unbiases subsequent backbones. Specifically, CHASE comprises a learnable parameterized network and an auxiliary objective. The parameterized network achieves plausible, sample-adaptive repositioning of skeleton sequences through two key components. First, the Implicit Convex Hull Constrained Adaptive Shift ensures that the new origin of the coordinate system is within the skeleton convex hull. Second, the Coefficient Learning Block provides a lightweight parameterization of the mapping from skeleton sequences to their specific coefficients in convex combinations. Moreover, to guide the optimization of this network for discrepancy minimization, we propose the Mini-batch Pair-wise Maximum Mean Discrepancy as the additional objective. CHASE operates as a sample-adaptive normalization method to mitigate inter-entity distribution discrepancies, thereby reducing data bias and improving the subsequent classifier's multi-entity action recognition performance. Extensive experiments on six datasets, including NTU Mutual 11/26, H2O, Assembly101, Collective Activity and Volleyball, consistently verify our approach by seamlessly adapting to single-entity backbones and boosting their performance in multi-entity scenarios. Our code is publicly available at~\url{https://github.com/Necolizer/CHASE}.
\end{abstract}

\section{Introduction}
Multi-entity action recognition, a challenging task derived from action recognition~\cite{wei2023unsupervised,lee2023cast,ilic2024selective,ExACT,peng2023joint,mmnet2023,liang2022multidataset,liu2024mmcl,liu2024explore}, aims to find the optimal estimator of the mapping from multi-entity motions to semantic labels, where entities involved can range from human bodies~\cite{igformer2022,GDCN2023}, hands~\cite{Assembly101} to various objects~\cite{H2O_TA-GCN2021}.
Recent approaches predominantly rely on skeletal data for addressing this challenge~\cite{igformer2022,GDCN2023,LSTM-IRN2022}, given that skeletons serve as a concise representation of spatiotemporal features~\cite{wang2024sic_CVPR,Peng2024,Zhu_2023_ICCV,Duan_2022_CVPR,he2022autolink,8640046,Rajasegaran_2023_CVPR}. This task has broad applications in human-robot interaction~\cite{sun2022interaction,jahangard2024jrdbsocial}, scene understanding~\cite{Bagautdinov_2017_CVPR,jiang2024scaling,9156959,ren2024spikepoint,Ng_2020_CVPR}, human motion analysis~\cite{Hong_2022_CVPR,Chen_2023_CVPR,Tang_2023_CVPR,yuan2023hap,Ci_2023_CVPR,wang2023hulk}, etc. 

Experiments have revealed that network architectures tailored for single-entity actions get unsatisfactory performance when confronted with multi-entity actions~\cite{igformer2022,wen2023interactive}. This inadequacy can be attributed to a common practice~\cite{CTR-GCN2021,InfoGCN2022,hdgcn2023,dstanet2020,STSA-Net2023} observed in treating interactions: each entity is encoded independently using the same single-entity backbone, and their features are averaged for recognition. This practice is based on an empirical assumption that each entity is independent and identically distributed (i.i.d.). But we demonstrate that different entities depicted by skeletons exhibit evident non-i.i.d. characteristics. Fig.~\ref{fig:intro} (a) Row 1 reveals significant inter-entity distribution discrepancies using estimated distributions of joints from distinct entities.
Such discrepancies can introduce bias into the backbone models, leading to suboptimal optimization and performance.
It explains why multi-entity action modeling usually diverges from the single-entity one.

\begin{figure}[t]
    \begin{center}
    \includegraphics[width=\linewidth]{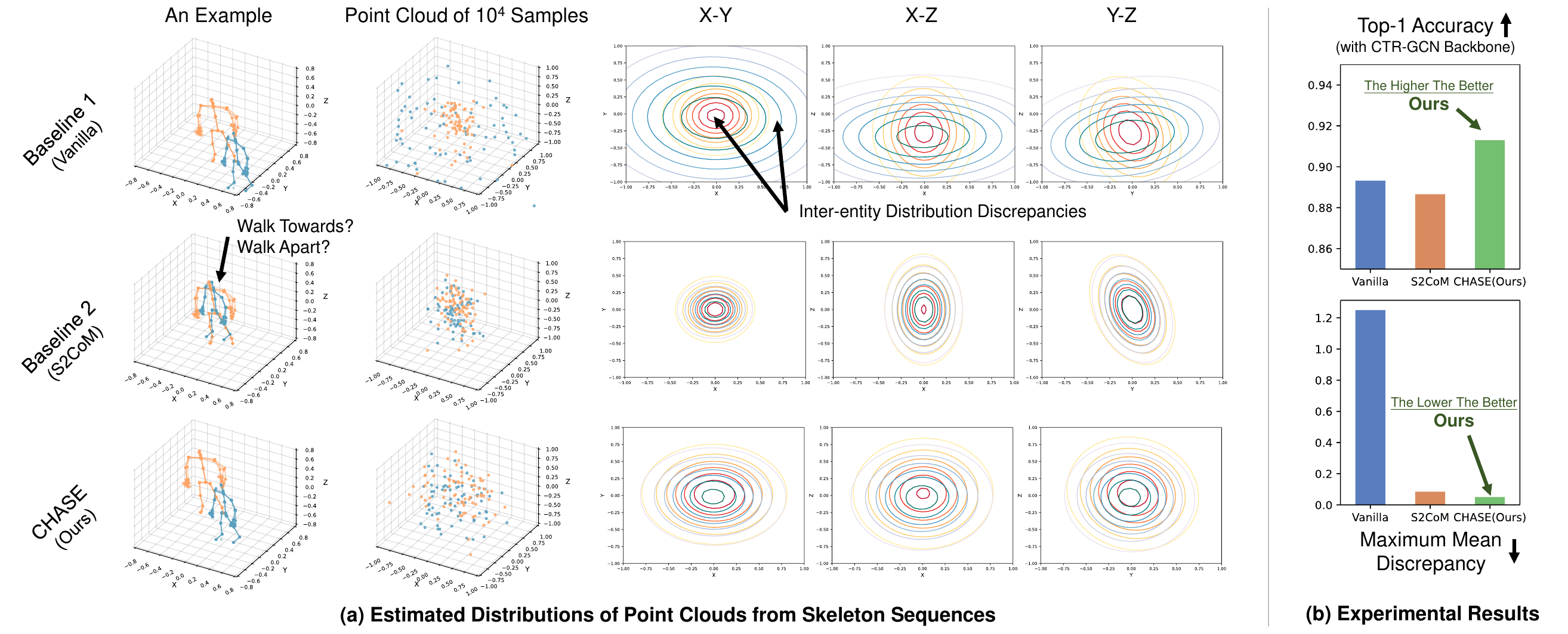}   
    \end{center}
    \vspace{-0.5em}
    \caption{\textbf{Inter-entity distribution discrepancies in multi-entity action recognition task.} (a) We delineate three distinct settings: \textit{Vanilla} (a common practice), \textit{S2CoM} (an intuitive baseline approach), and \textit{CHASE} (our proposed method). Column 2 illustrates spatiotemporal point clouds defined by the skeletons over \(10^4\) sequences. Column 3-5 depict the projections of estimated distributions of these point clouds onto the x-y, z-x, and y-z planes. These projections reveal significant inter-entity distribution discrepancies when using \textit{Vanilla}. (b) The discrepancies observed in \textit{Vanilla} introduce bias into backbone models, leading to unsatisfactory performance. Although \textit{S2CoM} can reduce these discrepancies, it makes the classifiers produce wrong predictions due to a complete loss of inter-entity information. With the lowest inter-entity discrepancy, our method unbiases the subsequent backbone to get the highest accuracy, underscoring its efficacy.}
    \label{fig:intro}
    \vspace{-1.0em}
\end{figure}

Using local coordinates for each entity holds promise in rendering them i.i.d., achieved by shifting individual origins to the per-entity spatiotemporal centers of mass (S2CoM), as depicted in Fig.~\ref{fig:intro} (a) Row 2. 
S2CoM is a straightforward and intuitive baseline to address this problem. However, this approach exacts a significant toll as it entails a complete loss of inter-entity information. Experimental results corroborate this notion, as illustrated in Fig.~\ref{fig:intro} (b), showcasing the detrimental impact of lacking inter-entity measurements on recognition performance. Nonetheless, this endeavor sparks an insightful realization: the potential for narrowing distribution gaps through origin shifts, thereby improving the performance of single-entity backbones in multi-entity scenarios.
A natural question arises: \uline{Can we reduce the bias by finding the optimal sample-adaptive shift in \(\mathbb{R}^3\) that minimizes the distribution discrepancies among entities?}

To address the inter-entity distribution discrepancy problem, we propose a Convex Hull Adaptive Shift based multi-Entity action recognition method (CHASE).
Serving as an additional normalization step, CHASE aims to accompany other single-entity backbones for enhanced multi-entity action recognition.
\uline{Our main insight lies in the adaptive repositioning of skeleton sequences to mitigate inter-entity distribution gaps, thereby unbiasing the subsequent backbone and boosting its performance.}
Specifically, CHASE consists of a learnable parameterized network and an auxiliary objective.
The parameterized network can achieve plausible and sample-adaptive repositioning of skeleton sequences through two crucial components. First, the Implicit Convex Hull Constrained Adaptive Shift (ICHAS) ensures that the new origin of the coordinate system is within the skeleton convex hull. Second, the Coefficient Learning Block (CLB) provides a lightweight parameterization of the mapping from skeleton sequences to their specific coefficients in ICHAS. Moreover, to guide the optimization of this network for discrepancy minimization, we propose the Mini-batch Pair-wise Maximum Mean Discrepancy (MPMMD) as the additional objective. This loss function quantifies pair-wise entity discrepancies using maximum mean discrepancy and integrates mini-batch sampling strategies to estimate the expectation.
In conclusion, CHASE works as a sample-adaptive normalization method to mitigate inter-entity distribution discrepancies, which can reduce bias in the subsequent classifier and enhance its multi-entity action recognition performance.

The contributions of this paper are three-fold:
\begin{enumerate}[leftmargin=1.5em]
    \item To the best of our knowledge, we are the first to investigate the issue of inter-entity distribution discrepancies in multi-entity action recognition. Our proposed method, Convex Hull Adaptive Shift for Multi-Entity Actions, effectively addresses this challenge. Our main idea is adaptively repositioning skeleton sequences to mitigate inter-entity distribution gaps, thereby unbiasing the subsequent backbones and boosting their performance.
    \item Serving as an additional normalization step for backbone models, CHASE consists of a learnable network and an auxiliary objective. Specifically, this network is formulated by the Implicit Convex Hull Constrained Adaptive Shift, together with the parameterization of a lightweight Coefficient Learning Block, which learns sample-adaptive origin shifts within skeleton convex hull. Additionally, the Mini-batch Pair-wise Maximum Mean Discrepancy objective is proposed to guide the discrepancy minimization.
    \item Experiments on NTU Mutual 11, NTU Mutual 26, H2O, Assembly101, Collective Activity Dataset and Volleyball Dataset consistently verify our proposed method by improving performance of single-entity backbones in multi-entity action recognition task.
\end{enumerate}

\section{Related Work}
\subsection{Skeleton-based Action Recognition}

\textbf{Datasets \& Models}. Datasets~\cite{NTU60,NTU120,NW-UCLA,liu2017pkummd} proffering annotated or estimated skeleton sequences support the development of skeleton-based action recognition. Based on these benchmarks, a significant body of works focus on the design of artificial neural network architecture for more effective skeleton-based action recognition.
Early models rely on the basic architecture of Recurrent Neural Network to capture temporal motions~\cite{ST-LSTM2016, Co-LSTM2016, GCA2017, VA-LSTM2017,2s-GCA2018,zhang2019view}.
Graph Convolution Network (GCN) shows predominated popularity as various graph convolution operators being proposed~\cite{CTR-GCN2021,InfoGCN2022,hdgcn2023,ST-GCN2018,AS-GCN2019,2s-AGCN2019,MS-G3D2020,9329123,duan2022pyskl,liu2023tsgcnext,degcn2024}. 
Recent progress of the model design is largely driven by adopting self-attention mechanism and transformer architecture~\cite{dstanet2020,STSA-Net2023,zhou2022hypergraph,iiptrans2023,long2023step,MAMP_2023_ICCV,PSUMNet2023,do2024skateformer}.

\textbf{Optimization Objectives}. Several works have explored additional optimization objectives beyond the commonly used cross-entropy (CE) loss to ensure robust recognition~\cite{InfoGCN2022,huang2023SkeletonGCL}, address challenging open-set problems~\cite{Peng2024}, or integrate supplementary natural language descriptions~\cite{xu2023language,xiang2023gap}.

However, existing methods are usually developed under the empirical assumption that entities are i.i.d. allowing the backbones to learn representations of actions concerning only one entity~\cite{CTR-GCN2021,InfoGCN2022,hdgcn2023,dstanet2020,STSA-Net2023,2s-AGCN2019}. However, when confronted with multi-entity interactions, their common practice of feeding the backbone separately often proves inadequate. Our proposed approach can seamlessly adapt to these existing methods, boosting their performance by minimizing the distribution discrepancies.

\subsection{Skeleton-based Multi-Entity Action Recognition}

\textbf{Interactive Actions}. Addressing datasets featuring two-person actions~\cite{NTU60,NTU120,SBU,Guo_2022_CVPR,Yin_2023_CVPR,xu2024inter,Liang2024} or egocentric hand-object interactions~\cite{Assembly101,H2O_TA-GCN2021,Garcia-Hernando_2018_CVPR,Ohkawa_2023_CVPR} necessitates effective interaction modeling. This spurs the development of various interaction recognition models leveraging human body and hand graph priors~\cite{igformer2022,H+O2019}. Notably, the introduction of the general interactive action recognition task~\cite{wen2023interactive} unifies diverse interactions across various entity types, including person-to-person~\cite{igformer2022,GDCN2023,LSTM-IRN2022,wen2023interactive,SkeleTR2023,liu2024learning}, hand-to-hand~\cite{Assembly101,wen2023interactive,Wen_2023_CVPR,shamil2024HandFormer} and hand-to-object~\cite{H2O_TA-GCN2021,wen2023interactive,H+O2019,shamil2024HandFormer,H2OTR2023CVPR,mucha2024perspective} interactions. 

\textbf{Group Activities}. Another interesting area of study is group activities~\cite{6095563,WANG2024110478,che2024m3act}, which involve more entities and may include irrelevant individual motions~\cite{cad2009,msibrahi2016VOL}. To this end, recent works usually leverage compositional reasoning from group skeletons, either alone or in combination with additional modalities, to achieve promising results~\cite{Azar_2019_CVPR,Wu_2019_CVPR,Gavrilyuk_2020_CVPR,Li_2021_ICCV,Yuan_Ni_2021,tamura2022,Han_2022_CVPR,Thilakarathne2022-tm,zhou2022composer,9710893}.

While these works demonstrate satisfactory performance through interaction modelling, some may encounter model scalability issues when confronted with the factorial growth of inter-entity interactions~\cite{igformer2022,H2O_TA-GCN2021,wen2023interactive,H+O2019}. Moreover, they usually lack sufficient justification for why multi-entity action modeling significantly diverges from the single-entity one~\cite{igformer2022,wen2023interactive,shamil2024HandFormer,mucha2024perspective}. In this paper, we delve into the inter-entity distribution discrepancy problem and introduce CHASE as a solution to minimize discrepancies. Through our proposed method, we aim to demonstrate that single-entity backbones can work well in multi-entity settings.

\begin{figure}[t]
    \begin{center}
    \includegraphics[width=\linewidth]{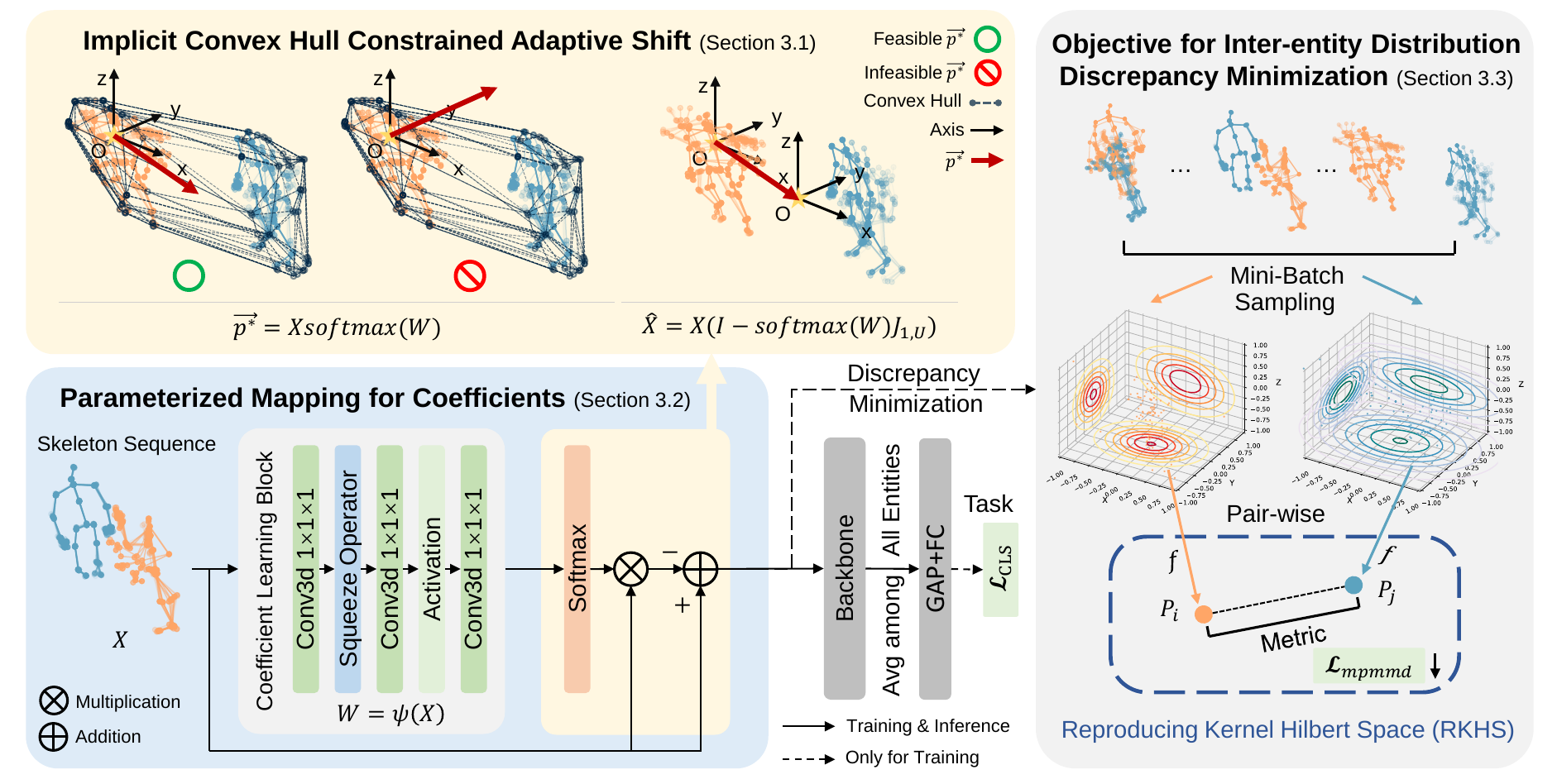}   
    \end{center}
    \vspace{-1.0em}
    \caption{\textbf{The overall framework of the proposed CHASE for multi-entity action recognition.} Given a skeleton sequence of multi-entity action as input, CHASE executes an implicit convex hull constrained adaptive shift with the Coefficient Learning Block, implemented as a lightweight backbone wrapper. CHASE also collects pair-wise shifted skeletons within mini-batches, effectively alleviating inter-entity distribution discrepancies by introducing an additional objective.}
    \label{fig:method}
    \vspace{-1.0em}
\end{figure}

\section{CHASE}
Fig.~\ref{fig:method} presents the framework of our proposed CHASE for skeleton-based multi-entity action recognition. We begin by presenting the formulation of the implicit convex hull constrained adaptive shift in Section~\ref{sec:ICHAS}, followed by the design of a lightweight Coefficient Learning Block in Section~\ref{sec:psi}. In Section~\ref{sec:mpmmd}, we subsequently introduce an additional objective termed Mini-batch Pair-wise Maximum Mean Discrepancy to further mitigate inter-entity distribution discrepancies.

\subsection{Implicit Convex Hull Constrained Adaptive Shift}
\label{sec:ICHAS}
The observed inter-entity distribution discrepancy in multi-entity skeleton sequences stems from the initial configuration of the world coordinate system. To mitigate this discrepancy, we propose an adaptive shift mechanism for each multi-entity skeleton sequence. It guides the origin to a sample-adaptive location, aiming to render each entity approximately i.i.d.. Moreover, based on the empirical assumption that the origin should not be far away from the skeletons, we implicitly constrain the new origin to remain within the skeleton convex hull by proving a simple but crucial proposition.

Consider a scenario where \(E\) interactive entities (e.g. persons) engage in purposeful activities over a duration of \(T\), and the pose of each entity is indicated by \(J\) joints with \(C\) Cartesian coordinates. The skeleton sequence of a multi-entity action is defined as \(X \in \mathbb{R}^{C\times T\times J \times E}\). For clarity we denote \(U=T\times J \times E\). Given points \(\Vec{p}_i\in \mathbb{R}^{C\times 1}\) in \(X\in \mathbb{R}^{C\times U}\), the subtraction \(\Vec{\hat{p}}_i = \Vec{p}_i - \Vec{p^*}(1\leq i\leq U)\) defines a shift of origin for them, where \(\Vec{\hat{p}}_i, \Vec{p^*}\in \mathbb{R}^{C\times 1}\). This can be expressed in matrix form as:
\begin{equation}
    \label{eq:sub_matrix}
    \hat{X} = X - \Vec{p^*}J_{1,U},
\end{equation}
where \(J_{1,U}\in \mathbb{R}^{1\times U}\) is a matrix of ones, and \(\hat{X}\) is the shifted skeleton sequence.
Now the problem is to make the shift vector \(\Vec{p^*}\) adaptive to \(X\). A naive implementation is the linear combination:
\begin{equation}
    \label{eq:xw}
    \hat{X} = X - \Vec{p^*}J_{1,U} = X(I - WJ_{1,U}),
\end{equation}
where \(I\in \mathbb{R}^{U\times U}\) and the weight matrix \(W\in \mathbb{R}^{U\times 1}\).

However, optimizing \(W\) can be challenging without constraints, as \(\Vec{p^*}\) could potentially be any point in \(\mathbb{R}^{3}\).
It is therefore reasonable to constrain \(\Vec{p^*}\) by incorporating the definition of the \textbf{Convex Hull}.

\begin{definition}[Convex Hull ~\cite{Rockafellar1996}]
\label{def:convexhull}
The convex hull \(S\) of a given set \(X\) can be defined as: 1) The (unique) minimal convex set containing \(X\). 2) The set of all convex combinations of points in \(X\). These definitions are equivalent.
\end{definition}

We jump to the formulation of the implicit skeleton convex hull constrained adaptive shift vector by proving the following proposition:

\begin{prop}
    \label{prop:ps}
    The implicit skeleton convex hull constrained adaptive shift vector is formulated as
    \begin{equation}
        \label{eq:xsoftmaxw}
        \Vec{p^*} = X\mathrm{softmax}(W),
    \end{equation}
    where \(X\in \mathbb{R}^{C\times U}\), \(W\in \mathbb{R}^{U\times 1}\), and \(\Vec{p^*}\in \mathbb{R}^{C\times 1}\).
    \(\Vec{p^*}\) in Eq.~\ref{eq:xsoftmaxw} is an element in the set of all convex combinations of points in \(X\). It is also a point that lies in the minimal convex set containing \(X\).
\end{prop}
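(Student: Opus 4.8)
The plan is to verify directly that $\mathrm{softmax}(W)$ is a vector of convex-combination coefficients, and then to invoke the equivalence of the two characterizations of the convex hull in Definition~\ref{def:convexhull}. Writing $X = [\,\Vec{p}_1 \mid \cdots \mid \Vec{p}_U\,]$ with columns $\Vec{p}_i \in \mathbb{R}^{C\times 1}$, the matrix product $X\Vec{c}$ for any column vector $\Vec{c} = (c_1,\dots,c_U)^{\top} \in \mathbb{R}^{U\times 1}$ expands as $\sum_{i=1}^{U} c_i \Vec{p}_i$, i.e.\ a linear combination of the given points. So it suffices to show that the particular coefficient vector $\Vec{c} = \mathrm{softmax}(W)$ has nonnegative entries summing to one, i.e.\ lies in the standard simplex.

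First I would recall the definition $\mathrm{softmax}(W)_i = \exp(W_i)\big/\sum_{j=1}^{U}\exp(W_j)$. Since the exponential is strictly positive and the denominator is a finite, nonzero sum, each entry $\mathrm{softmax}(W)_i$ is well defined and strictly positive, hence in particular nonnegative. Summing over $i$ and factoring out the common denominator gives $\sum_{i=1}^{U} \mathrm{softmax}(W)_i = \big(\sum_{i=1}^{U}\exp(W_i)\big)\big/\big(\sum_{j=1}^{U}\exp(W_j)\big) = 1$. Thus $\mathrm{softmax}(W)$ is a valid tuple of convex-combination weights.

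Combining the two observations, $\Vec{p^*} = X\mathrm{softmax}(W) = \sum_{i=1}^{U} \mathrm{softmax}(W)_i\,\Vec{p}_i$ is, by definition, a convex combination of the points of $X$, hence an element of ``the set of all convex combinations of points in $X$'' from part (2) of Definition~\ref{def:convexhull}. By the stated equivalence, that set is exactly the convex hull $S$ of $X$, which by part (1) of Definition~\ref{def:convexhull} is the (unique) minimal convex set containing $X$. Therefore $\Vec{p^*} \in S$, which establishes both assertions of the proposition.

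I do not expect a genuine obstacle here: the entire content is in correctly identifying the linear-algebraic object $X\mathrm{softmax}(W)$ with the geometric notion of a convex combination of the columns of $X$. The one point worth an explicit sentence is that softmax produces coefficients that are nonnegative and sum to \emph{exactly} one (not merely approximately or up to scaling), since this is precisely what upgrades the generic linear combination $X\Vec{c}$ to a bona fide convex combination; everything else then follows immediately from Definition~\ref{def:convexhull}.
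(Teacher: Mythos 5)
Your proposal is correct and follows essentially the same route as the paper's own proof: verify that the softmax outputs are strictly positive and sum to one, identify $X\mathrm{softmax}(W)$ with the convex combination $\sum_i \tilde{\alpha}_i \Vec{p}_i$, and invoke the equivalence of the two characterizations in Definition~\ref{def:convexhull} for the second claim. No gaps; nothing further to add.
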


\begin{proof}
The first half of this proposition is equivalent to show that the matrix product of \(X\) and  \(\mathrm{softmax}(W)\) is a convex combination of \(X\).
\(X\) is a set of points \(\Vec{p}_1, \dots, \Vec{p}_{U}\) with \(C\) Cartesian coordinates.
We denote \(\mathrm{softmax}(W)\) as \(\tilde{W}\) with component \(\tilde{\alpha}_i\), which is formulated as
\begin{equation}
    \tilde{\alpha}_i=\frac{e^{\alpha_i}}{\sum_{j=1}^{U} e^{\alpha_j}}\quad(1\leq i\leq U),
\end{equation}
where \(\alpha_i\) is a component of \(W\). By applying function \(\mathrm{softmax}:\mathbb{R}^{U}\mapsto (0,1)^{U}\), each component \(\tilde{\alpha}_i\) of \(\tilde{W}\) will be in the interval \((0, 1)\), and the components will add up to 1. Thus we have \(\Vec{p^*} = \sum_{i=1}^{U} \tilde{\alpha}_i\Vec{p}_i\), where all \(\tilde{\alpha}_i \in \mathbb{R}\) satisfy \(\tilde{\alpha}_i> 0\) and \(\sum_{i=1}^{U} \tilde{\alpha}_i=1\). This is sufficient for the definition of a convex combination, which only requires \(\tilde{\alpha}_i\geq 0\). Then the second half of this proposition is evident with the equivalence of definitions in Def.~\ref{def:convexhull}.
\end{proof}

Proposition \ref{prop:ps} also implies that all possible \(\Vec{p^*}\) constitute a subset \(\tilde{S}\) of the convex hull \(S\) defined by the skeleton joints for all entities during the action period:
\begin{equation}
    \tilde{S} = \left\{\left.\sum_{i=1}^{U} \tilde{\alpha}_i\Vec{p}_i\right|\Vec{p}_i\in X, \sum_{i=1}^{U} \tilde{\alpha}_i=1,\tilde{\alpha}_i\in(0,1)\right\}\subset S,
\end{equation}
which specifically is the interior of $S$ (i.e., the open convex hull of \(X\)). We provide an example of the feasible \(\Vec{p^*}\) in the interior of $S$, marked by a green circle in Fig.~\ref{fig:method}. The center of mass (CoM) $\bar{\Vec{p}}$ is also in the set $\tilde{S}$, proven by simply taking all $\tilde{\alpha}_i=1/U(1\leq i\leq U)$.

With Eq.~\ref{eq:sub_matrix} and Eq.~\ref{eq:xsoftmaxw}, we introduce Implicit Convex Hull Constrained Adaptive Shift as:
\begin{equation}
    \label{eq:ConAdaShf}
    \hat{X} = X(I - \mathrm{softmax}(W)J_{1,U}),
\end{equation}
where \(W\) is coefficients needed to be optimized.
In Eq.~\ref{eq:xw}, the search space for \(\Vec{p^*}\) encompasses the entire \(R^3\). However, in Eq.~\ref{eq:ConAdaShf}, it's restricted to the open convex hull \(\tilde{S}\). We optimize the weights for each point under the constraint of the skeleton convex hull, subsequently deriving the adaptive shift vector for each sample. Applying a softmax function implicitly constrains \(\Vec{p^*}\) to remain within the convex hull \(S\), while preserving inter-entity measurements. Consequently, the subtraction between the point set and the shift vector repositions the origin to a specific point in the open convex hull.

\subsection{Parameterized Mapping for Coefficients}
\label{sec:psi}
In this section, a lightweight Coefficient Learning Block is introduced to parameterize the mapping from the input skeleton sequence to the weight matrix.
This parameterization allows CHASE to achieve sample-adaptive coefficients beyond sample-adaptive shifts formulated in Section~\ref{sec:ICHAS}.

In Eq. \ref{eq:ConAdaShf}, we note that the first-order partial derivative of \(\hat{X}\) with respect to \(X\) is
\begin{equation}
    \frac{\partial \hat{X}}{\partial X} = I - J_{U,1}\mathrm{softmax}(W^T),
\end{equation}
whose result is constant. This implies that the same learnt weight matrix \(W\) is applied to all different \(X\)s when getting adaptive \(\Vec{p^*}\)s. To make the coefficients \(W\) dependent on the input \(X\), a mapping \(\psi:\mathbb{R}^{C\times U}\mapsto \mathbb{R}^{U\times 1}\) is expected to map \(X\) to \(W\).

As depicted in Fig.~\ref{fig:method}, we parameterize the nonlinear mapping \(\psi\) as a sequence of learnable layers, termed the Coefficient Learning Block. This lightweight CLB can be formulated as follows:
\begin{equation}
    \label{eq:psi}
    W=\psi(X)=W_3\delta(W_2\phi(W_1X+b)),
\end{equation}
where \(W_1\in \mathbb{R}^{C_1\times C}, W_2\in \mathbb{R}^{C_2\times C_1}, W_3\in \mathbb{R}^{U\times C_2}\) are weight matrices, \(b\) is a bias matrix, \(\phi:\mathbb{R}^{C_1\times U}\mapsto \mathbb{R}^{C_1\times 1}\) is a squeeze operator~\cite{hu2019senet} and \(\delta\) is an activation function. Using a dimensionality-reduction layer and a dimensionality-increasing layer around the non-linearity is a common gating mechanism parameterization~\cite{hu2019senet,NIPS2015_215a71a1}. Hence, we ensure \(U\geq C_1>C_2\).

\subsection{Objective for Inter-entity Distribution Discrepancy Minimization}
\label{sec:mpmmd}
To facilitate CHASE optimization, we introduce an additional objective aimed at minimizing the inter-entity distribution discrepancies of the shifted skeleton sequences. This objective quantifies the pair-wise discrepancies and employs mini-batch sampling strategies to estimate the expectation.

Maximum mean discrepancy is a metric used to measure the distance between distributions, defined as the distance between their embeddings in the reproducing kernel Hilbert space (RKHS) \(\mathcal{H}\):
\begin{equation}
    \label{eq:vanimmd}
    \text{MMD}(P,Q)=\sup _{\|f\|_{\mathcal {H}}\leq 1}\left(\mathbb {E} [f(x)]-\mathbb {E} [f(y)]\right),
\end{equation}
where \(\sup(\cdot)\) denotes the supremum. It is equivalent to finding the RKHS function \(f\) that maximizes the difference in expectations between the two probability distributions \(P(x)\) and \(Q(y)\).

Suppose each entity distribution is denoted as \(P^i(1\leq i \leq E)\) for \(E\) entities, we measure the distance of all pair-wise distributions using the empirical mean
\begin{equation}
\label{eq:mmd}
    \mathbb{E}_{r(z)}[\text{MMD}(z)]=\sum_{i=1}^{E-1}\sum_{j=i+1}^{E}\text{MMD}(P^i,P^j)/\text{C}(E,2),
\end{equation}
where \(z=(P^i,P^j)(1\leq i,j\leq E, i\neq j)\) with the probability density \(r(z)\), and \(C(E,2)\) denotes a combination of \(E\) things taken 2 at a time without repetition.
We adopt two approximations for computational efficiency. The first involves estimating \(\mathbb {E}[f(x)]\) in Eq.~\ref{eq:vanimmd} using a mini-batch of \(x\). The second approximation concerns the right-hand side of Eq.~\ref{eq:mmd}, which is impractical due to its complexity of \(O(n!)\) in terms of the entity count. Instead, it can be approximated by uniformly sampling a mini-batch of \(M\) entity pairs from all possible \(\text{C}(E,2)\) combinations \(z\):
\begin{equation}
    \label{eq:MPMMD}
    \mathbb{E}_{r(z)}[\text{MMD}(z)] \approx \frac{1}{M}\sum_{m=1}^{M}\text{MMD}(z_m).
\end{equation}

We denote Eq.~\ref{eq:MPMMD} with the above two approximations to be the Mini-batch Pair-wise Maximum Mean Discrepancy Loss \(\mathcal{L}_{mpmmd}\), thereby we have the total loss function for training:
\begin{equation}
    \label{eq:totalloss}
    \mathcal{L} = \mathcal{L}_{CLS} + \lambda \mathcal{L}_{mpmmd},
\end{equation}
where \(\mathcal{L}_{CLS}\) is the classification loss and \(\lambda\) is the trade-off weight factor for \(\mathcal{L}_{mpmmd}\).

\begin{figure}[t]
    \begin{center}
    \includegraphics[width=\linewidth]{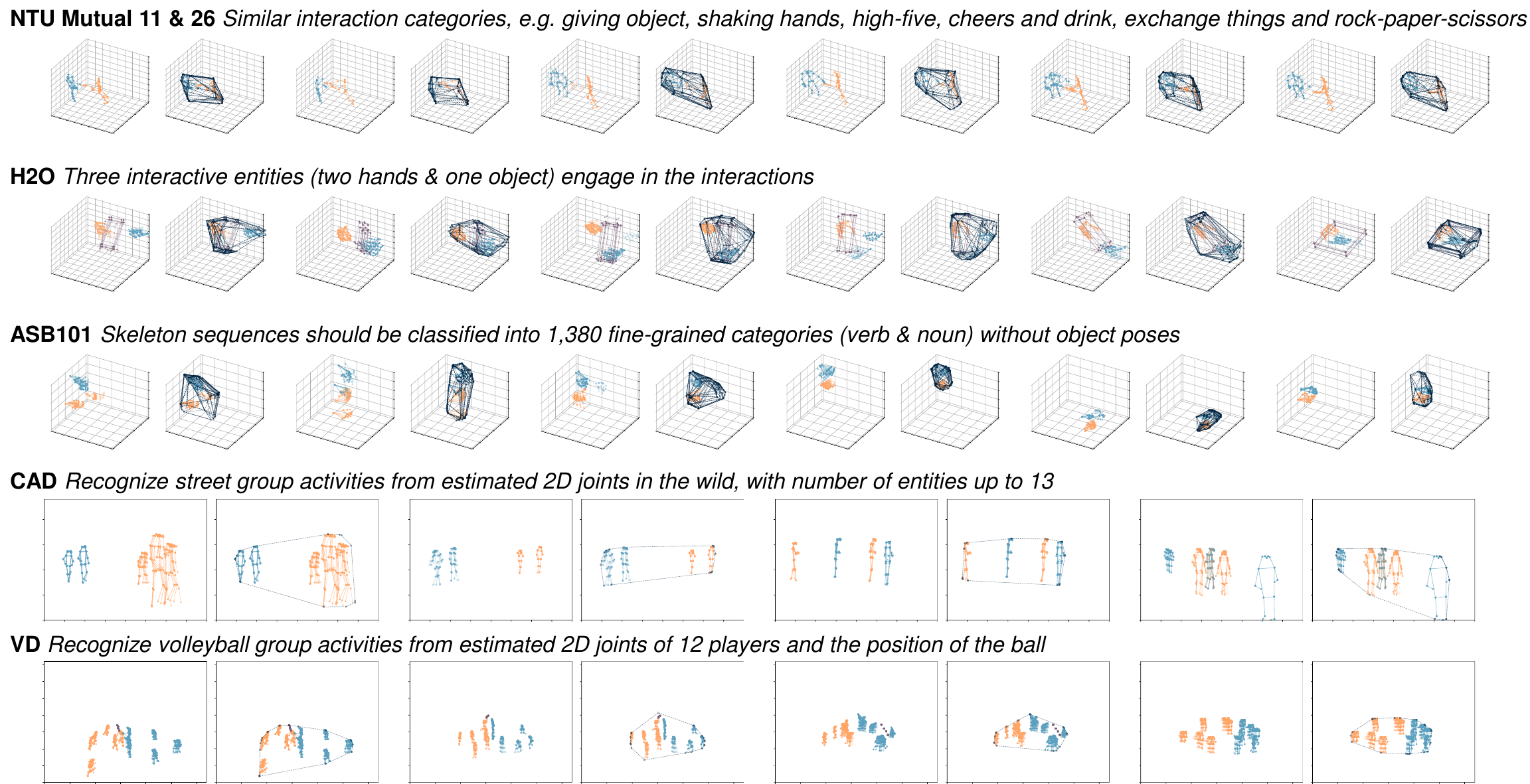}   
    \end{center}
    \vspace{-1.0em}
    \caption{\textbf{Visualizations of multi-entity action samples and their skeleton convex hulls.}}
    \label{fig:dataset}
    \vspace{-1.0em}
\end{figure}

\section{Experiments}
\subsection{Datasets \& Settings}
\label{sec:datasetting}
We conduct experiments on six multi-entity action recognition datasets. Fig.~\ref{fig:dataset} presents skeletal samples in these datasets and their skeleton convex hulls, showcasing their difficulties.

\textbf{NTU Mutual 11} and \textbf{NTU Mutual 26}, respectively subsets of \textbf{NTU RGB+D}~\cite{NTU60} and \textbf{NTU RGB+D 120}~\cite{NTU120}, consist of a variety of inter-person mutual actions. NTU Mutual 11 adopts the widely-used X-Sub and X-View criteria, while NTU Mutual 26 follows the X-Sub and X-Set criteria.

\textbf{H2O}~\cite{H2O_TA-GCN2021} proffers 3D poses of human hands and bounding boxes of the manipulated objects, facilitating both hand-to-hand and hand-to-object interactions learning. We follow the training, validation, and test splits outlined in~\cite{H2O_TA-GCN2021} in our experiments.

\textbf{Assembly101 (ASB101)}~\cite{Assembly101} is a large and challenging 3D manual procedural activity dataset, with 1,380 categories of interactive actions. We follow the training, validation, and test splits described in~\cite{Assembly101} for evaluations. Fine-grained actions (verb \& noun) are adopted as labels in experiments.

\textbf{Collective Activity Dataset (CAD)}~\cite{cad2009} captures people and their behaviors in public using street cameras, categorizing pedestrian collective activities into 4 groups. We adopt the same categories, individual labels, train-test split in~\cite{zhou2022composer}. Only 2D joint coordinates are used in our experiments.

\textbf{Volleyball Dataset (VD)}~\cite{msibrahi2016VOL} consists of video clips from volleyball tournaments and includes 8 group activity classes based on volleyball terminology. We follow the Original split described in~\cite{zhou2022composer} for evaluation. Only estimated 2D joint coordinates are used as input features.

\textbf{Settings.} Experiments are conducted on the GeForce RTX 3070 GPUs with PyTorch. CTR-GCN~\cite{CTR-GCN2021}, InfoGCN~\cite{InfoGCN2022}, STSA-Net~\cite{STSA-Net2023} and HD-GCN~\cite{hdgcn2023} are chosen as our baseline models. To ensure fair comparisons, we adopt single intra-skeleton modality without multi-modality fusion following~\cite{wen2023interactive}. For CTR-GCN in NTU Mutual 26, we adopt input shape \(X \in \mathbb{R}^{3\times 64\times 25 \times 2}\), segment size (\(1,1,1\)) and \(\lambda=0.1\) in CHASE. SGD optimizer is used with Nesterov momentum of 0.9, a initial learning rate of 0.1 and a decay rate 0.1 at the 80th and 100th epoch. Batch
size is set to 64. More detailed configurations for each model are provided in the Appendix. 

\begin{table*}[t]
        \renewcommand\arraystretch{1.3}
	\centering
	\caption{Comparisons with Skeleton-based Methods on Multi-Entity Action Datasets}
	\label{sota}
        \resizebox{\textwidth}{!}{
        \begin{threeparttable}
        \resizebox{\textwidth}{!}{
	\begin{NiceTabular}{l|c|C{20mm}|C{20mm}|C{20mm}|C{20mm}}[colortbl-like]
		\hline
		\multicolumn{1}{l}{\multirow{2}{*}{Method}}&
            \multirow{2}{*}{Venue}&
            \multicolumn{2}{c}{NTU Mutual 26(\%)}&
            \multicolumn{2}{c}{NTU Mutual 11(\%)}\\
            \cline{3-6}

            &
		  &
            X-Sub&
            X-Set&
            X-Sub&
            X-View\\

            \hline

            \rowcolor{myYellow!10}
            GDCN~\cite{GDCN2023}&
            TPAMI'23&
            \(85.80\)&
            \(92.10\)&
            -&
            -\\

            \rowcolor{myYellow!10}
            SkeleTR~\cite{SkeleTR2023}&
            ICCV'23&
            \(87.80\) &
            \(88.30\) &
            \(94.80\) &
            \(97.70\)\\

            \rowcolor{myYellow!10}
            ISTA-Net~\cite{wen2023interactive}&
            IROS'23&
            \({90.56}_{(\pm0.08)}\)&
            \({91.72}_{(\pm0.30)}\)&
            -&
            -\\    

            \rowcolor{myYellow!10}
            AHNet-Large~\cite{WANG2024110478}&
            PR'24&
            \(86.43\)&
            \(86.64\)&
            \(90.85\)&
            \(93.38\)\\

            \rowcolor{myYellow!10}
            me-GCN~\cite{liu2024learning}&
            arXiv'24&
            \(90.00\)&
            \(90.00\)&
            \(95.50\)&
            \(98.20\)\\

            \hline

            \rowcolor{myPink!10}
            CTR-GCN~\cite{CTR-GCN2021}&
            ICCV'21&
            \({89.32}_{(\pm0.06)}\)&
            \({90.19}_{(\pm0.17)}\)&
            \({95.94}_{(\pm0.36)}\)&
            \({98.32}_{(\pm0.29)}\)\\

            \rowcolor{myPink!50}
            \textbf{+ CHASE (Ours)}&
            -&
             \(\bm{{91.30}}^{{\color{blue}\uparrow 1.98}}_{(\pm0.22)}\)&
             \(\bm{{92.34}}^{{\color{blue}\uparrow 2.15}}_{(\pm0.10)}\)&
             \(\bm{{96.45}}^{{\color{blue}\uparrow 0.51}}_{(\pm0.05)}\)&
             \(\bm{{98.83}}^{{\color{blue}\uparrow 0.51}}_{(\pm0.13)}\)\\

             \hline

             \rowcolor{myPink!10}
            InfoGCN~\cite{InfoGCN2022}\small{(k=1)}&
            CVPR'22&
             \({90.22}_{(\pm0.13)}\)&
             \({91.13}_{(\pm0.16)}\)&
            \({95.51}_{(\pm0.10)}\)&
            \({97.76}_{(\pm0.22)}\)\\

            \rowcolor{myPink!50}
            \textbf{+ CHASE (Ours)}&
            -&
            
            \(\bm{{91.86}}^{{\color{blue}\uparrow 1.64}}_{(\pm0.05)}\)&
             \(\bm{{92.41}}^{{\color{blue}\uparrow 1.28}}_{(\pm0.34)}\)&
            \(\bm{{96.35}}^{{\color{blue}\uparrow 0.84}}_{(\pm0.18)}\)&
            \(\bm{{98.25}}^{{\color{blue}\uparrow 0.49}}_{(\pm0.25)}\)\\

            \hline

            \rowcolor{myPink!10}
            STSA-Net~\cite{STSA-Net2023}&
            Neuro.'23&
             \({88.41}_{(\pm0.01)}\)&
             \({90.19}_{(\pm0.11)}\)&
            \({95.96}_{(\pm0.09)}\)&
            \({98.47}_{(\pm0.09)}\)\\

            \rowcolor{myPink!50}
            \textbf{+ CHASE (Ours)}&
            -&
             \(\bm{{89.77}}^{{\color{blue}\uparrow 1.36}}_{(\pm0.18)}\)&
             \(\bm{{91.54}}^{{\color{blue}\uparrow 1.35}}_{(\pm0.12)}\)&
            \(\bm{{96.63}}^{{\color{blue}\uparrow 0.68}}_{(\pm0.10)}\)&
            \(\bm{{98.73}}^{{\color{blue}\uparrow 0.26}}_{(\pm0.08)}\)\\

             \hline

            \rowcolor{myPink!10}
            HD-GCN~\cite{hdgcn2023}\small{(CoM=1)}&
            ICCV'23&
             \({88.25}_{(\pm0.44)}\)&
             \({90.08}_{(\pm0.12)}\)&
            \({95.58}_{(\pm0.10)}\)&
            \({97.93}_{(\pm0.07)}\)\\

            \rowcolor{myPink!50}
            \textbf{+ CHASE (Ours)}&
            -&
             \(\bm{{90.81}}^{{\color{blue}\uparrow 2.56}}_{(\pm0.13)}\)&
             \(\bm{{92.06}}^{{\color{blue}\uparrow 1.97}}_{(\pm0.21)}\)&
            \(\bm{{96.22}}^{{\color{blue}\uparrow 0.64}}_{(\pm0.05)}\)&
            \(\bm{{98.31}}^{{\color{blue}\uparrow 0.38}}_{(\pm0.07)}\)\\
		
		\hline
	\end{NiceTabular}
        }

        \vspace{0.5em}

        \resizebox{\textwidth}{!}{
	\begin{NiceTabular}{l|c|c|c|c|c}[colortbl-like]
		\hline
		\multicolumn{1}{l}{Method}&
            Venue&
            H2O(\%)&
            ASB101(\%)&
            CAD(\%)&
            VD(\%)\\

            \hline

            \rowcolor{myYellow!10}
            AT~\cite{9156959}&
            CVPR'20&
            -&
            -&
            -&
            \(92.30\)\\

            \rowcolor{myYellow!10}
            ISTA-Net~\cite{wen2023interactive}&
            IROS'23&
            \({89.09}_{(\pm1.21)}\)&
            \({28.01}_{(\pm0.06)}\)&
            \({87.16}_{(\pm2.55)}\)&
            \({91.40}_{(\pm0.23)}\)\\

            \rowcolor{myYellow!10}
            H2OTR~\cite{H2OTR2023CVPR}&
            CVPR'23&
            \(90.90\) &
            - &
            - &
            - \\

            \rowcolor{myYellow!10}
            EffHandEgoNet~\cite{mucha2024perspective}&
            arXiv'24&
            \(91.32\) &
            - &
            - &
            - \\

            \rowcolor{myYellow!10}
            AHNet-Large~\cite{WANG2024110478}&
            PR'24&
            - &
            - &
            \(89.32\) &
            \(84.31\) \\

            \hline

            \rowcolor{myPink!10}
            CTR-GCN~\cite{CTR-GCN2021}&
            ICCV'21&
            \({81.68}_{(\pm0.85)}\)&
            \({27.83}_{(\pm0.45)}\)&
            \({80.45}_{(\pm2.29)}\)&
            \({92.66}_{(\pm0.21)}\)\\

            \rowcolor{myPink!50}
            \textbf{+ CHASE (Ours)}&
            -&
            \(\bm{{91.05}}^{{\color{blue}\uparrow 9.37}}_{(\pm1.98)}\)&
            \(\bm{{28.03}}^{{\color{blue}\uparrow 0.21}}_{(\pm0.30)}\)&
            \(\bm{{89.61}}^{{\color{blue}\uparrow9.16}}_{(\pm0.20)}\)&
            \(\bm{{92.89}}^{{\color{blue}\uparrow 0.24}}_{(\pm0.15)}\)\\

            \hline

            \rowcolor{myPink!10}
            InfoGCN~\cite{InfoGCN2022}\small{(k=1)}&
            CVPR'22&
            \({76.24}_{(\pm3.93)}\) &
            \({27.18}_{(\pm0.10)}\)&
            \({83.07}_{(\pm0.46)}\)&
            \({91.77}_{(\pm0.15)}\)\\

            \rowcolor{myPink!50}
            \textbf{+ CHASE (Ours)}&
            -&
            \(\bm{{83.47}}^{{\color{blue}\uparrow 7.23}}_{(\pm2.89)}\)&
            \(\bm{{27.36}}^{{\color{blue}\uparrow 0.18}}_{(\pm0.12)}\)&
            \(\bm{{84.18}}^{{\color{blue}\uparrow 1.11}}_{(\pm2.91)}\)&
            \(\bm{{92.00}}^{{\color{blue}\uparrow 0.23}}_{(\pm0.15)}\)\\

             \hline

             \rowcolor{myPink!10}
            STSA-Net~\cite{STSA-Net2023}&
            Neuro.'23&
            \({92.29}_{(\pm0.52)}\)&
            \({27.70}_{(\pm0.19)}\)&
             \({80.20}_{(\pm3.60)}\)&
             \({92.52}_{(\pm0.52)}\)\\

            \rowcolor{myPink!50}
            \textbf{+ CHASE (Ours)}&
            -&
            \(\bm{{94.77}}^{{\color{blue}\uparrow 2.48}}_{(\pm1.36)}\)&
            \(\bm{{27.81}}^{{\color{blue}\uparrow 0.11}}_{(\pm0.13)}\)&
             \(\bm{{85.93}}^{{\color{blue}\uparrow 5.73}}_{(\pm2.46)}\)&
             \(\bm{{92.78}}^{{\color{blue}\uparrow 0.26}}_{(\pm0.41)}\)\\

            \hline

            \rowcolor{myPink!10}
            HD-GCN~\cite{hdgcn2023}\small{(CoM=1)}&
            ICCV'23&
            \({72.73}_{(\pm0.41)}\) &
            \({27.31}_{(\pm0.36)}\)&
            \({76.93}_{(\pm4.38)}\)&
            \({91.32}_{(\pm0.02)}\)\\

            \rowcolor{myPink!50}
            \textbf{+ CHASE (Ours)}&
            -&
            \(\bm{{81.61}}^{{\color{blue}\uparrow 8.88}}_{(\pm1.03)}\)&
            \(\bm{{27.50}}^{{\color{blue}\uparrow 0.19}}_{(\pm0.24)}\)&
            \(\bm{{82.39}}^{{\color{blue}\uparrow 5.46}}_{(\pm1.61)}\)&
            \(\bm{{92.00}}^{{\color{blue}\uparrow 0.68}}_{(\pm0.07)}\)\\
		
		\hline
	\end{NiceTabular}
        }
       \end{threeparttable}
        }
    \vspace{-1.5em}
\end{table*}

\subsection{Experimental Results}

Table~\ref{sota} shows the experimental results on different benchmarks, reporting the averaged top-1 accuracy and its standard deviation in runs with several seed initializations. We compare CHASE with vanilla counterparts (light red background) and the state-of-the-art multi-entity action recognition methods (light yellow background). By adopting our proposed CHASE, we can boost the vanilla counterparts' performance by a noticeable margin in most settings. It yields varying degrees of accuracy improvement across different baseline models and benchmarks, owing to differences in model parameter count, training objective, data scale, etc. Compared to models with complicated interaction designs, CHASE can help single action backbones achieve the state-of-the-art performance in interaction recognition by outperforming ISTA-Net~\cite{wen2023interactive}, AHNet-Large~\cite{WANG2024110478}, etc. In group activities recognition task, which is more challenging for single-entity backbones, CHASE can help achieve competitive performance.
Fig.~\ref{fig:vizchas} visualizes that CHASE can effectively alleviate the potential inter-entity distribution discrepancies across a range of data scales, thereby ensuring robust backbone optimization and inference. UMAP~\cite{mcinnes2018umap-software} visualization in Fig.~\ref{sup-fig:umap} demonstrates our proposed CHASE differentiate similar multi-entity actions better by assisting backbones to learn more distinctive representations.
\begin{figure}[t]
    \begin{center}
    \includegraphics[width=\linewidth]{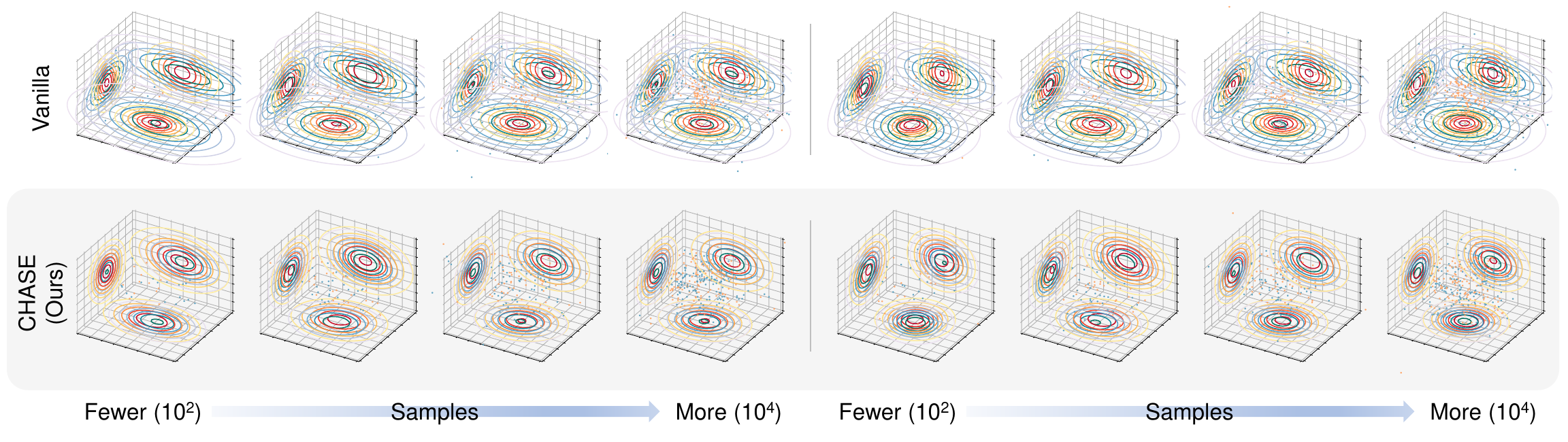}   
    \end{center}
    \vspace{-1.0em}
    \caption{\textbf{Qualitative results of CHASE}. Different entity distributions are denoted by blue and orange. CHASE effectively mitigates inter-entity distribution discrepancies, demonstrating its clear effectiveness across a range of data scales, from small to large.}
    \label{fig:vizchas}
\end{figure}
\begin{wraptable}{r}{0.48\textwidth}
	\centering
        \vspace{-1.0em}
	\caption{Comparison with Other Alternatives}
	\label{ablation:alternatives}
	\begin{tabular}{l|c|c}
            \hline
                Method & Acc (\%) & \(\Delta\) (\%) \\
		\hline
                Vanilla & \({89.32}_{(\pm0.06)}\) & - \\
            \hline
                S2CoM & \({88.66}_{(\pm0.26)}\) & \(-0.67\)\\
                BatchNorm  & \({89.06}_{(\pm0.16)}\) & \(-0.27\)\\
                ER~\cite{wen2023interactive} & \({89.34}_{(\pm0.15)}\) & \(+0.02\)\\
                Aug & \({89.72}_{(\pm0.04)}\) & \(+0.40\)\\
                S2CoM\dag/STD & \({90.29}_{(\pm0.06)}\) & \(+0.97\)\\
                S2CoM\dag & \({90.79}_{(\pm0.10)}\) & \(+1.47\)\\
            \hline
                \textbf{CHASE (Ours)} & \(\bm{{91.30}}_{(\pm0.22)}\) &\(\bm{+1.98}\)\\
            \hline
	\end{tabular}
        \vspace{-4.0em}
\end{wraptable}

\subsection{Ablation Study}
In this section, we conduct ablation studies on the widely-adopted benchmarks NTU Mutual 26 and NTU Mutual 11 with only joint modality.

\begin{table}[t]
	\centering
	\caption{Analysis of Inter-entity Distribution Discrepancies}
	\label{discrepancy}
        \resizebox{\textwidth}{!}{
	\begin{tabular}{l|c|c|c|c|c|c}
            \hline
                Set& Method & Avg KLD \(\downarrow\) & JSD \(\downarrow\) & BD \(\downarrow\) & HD \(\downarrow\) & MMD \(\downarrow\) \\
		\hline
                \multirow{2}{*}{\uppercase\expandafter{\romannumeral1}}& Vanilla & \({1.07}_{(\pm0.25)}\) & \({0.19}_{(\pm0.04)}\) & \({0.25}_{(\pm0.06)}\) & \({0.46}_{(\pm0.06)}\) & \({0.94}_{(\pm0.54)}\)\\
                & \textbf{CHASE (Ours)} & \(\bm{{0.39}}_{(\pm0.09)}\) & \(\bm{{0.08}}_{(\pm0.02)}\) & \(\bm{{0.10}}_{(\pm0.02)}\) & \(\bm{{0.30}}_{(\pm0.03)}\) & \(\bm{{0.05}}_{(\pm0.02)}\) \\
            \hline
                \multirow{2}{*}{\uppercase\expandafter{\romannumeral2}}& Vanilla & \({1.00}_{(\pm0.23)}\) & \({0.18}_{(\pm0.04)}\) & \({0.23}_{(\pm0.05)}\) & \({0.45}_{(\pm0.05)}\) & \({1.03}_{(\pm0.60)}\)\\
                & \textbf{CHASE (Ours)} & \(\bm{{0.45}}_{(\pm0.08)}\) & \(\bm{{0.10}}_{(\pm0.02)}\) & \(\bm{{0.11}}_{(\pm0.02)}\) & \(\bm{{0.32}}_{(\pm0.03)}\) & \(\bm{{0.07}}_{(\pm0.02)}\) \\
            \hline
                \multirow{2}{*}{\uppercase\expandafter{\romannumeral3}}& Vanilla & \({0.72}_{(\pm0.14)}\) & \({0.14}_{(\pm0.02)}\) & \({0.17}_{(\pm0.03)}\) & \({0.39}_{(\pm0.04)}\) & \({1.25}_{(\pm0.60)}\)\\
                & \textbf{CHASE (Ours)} & \(\bm{{0.41}}_{(\pm0.08)}\) & \(\bm{{0.08}}_{(\pm0.02)}\) & \(\bm{{0.10}}_{(\pm0.02)}\) & \(\bm{{0.30}}_{(\pm0.03)}\) & \(\bm{{0.05}}_{(\pm0.04)}\) \\
            \hline
                \multirow{2}{*}{\uppercase\expandafter{\romannumeral4}}& Vanilla & \({0.75}_{(\pm0.14)}\) & \({0.14}_{(\pm0.03)}\) & \({0.17}_{(\pm0.03)}\) & \({0.40}_{(\pm0.04)}\) & \({1.15}_{(\pm0.56)}\)\\
                & \textbf{CHASE (Ours)} & \(\bm{{0.41}}_{(\pm0.07)}\) & \(\bm{{0.08}}_{(\pm0.01)}\) & \(\bm{{0.09}}_{(\pm0.02)}\) & \(\bm{{0.30}}_{(\pm0.03)}\) & \(\bm{{0.04}}_{(\pm0.03)}\) \\
            \hline    
	\end{tabular}
        }
        \vspace{-1.5em}
\end{table}

\textbf{Comparison with other alternatives}. We compare our proposed CHASE with several alternatives as follows: 
1) Vanilla: Use the raw world coordinates or pixel coordinates. 2) S2CoM: Shift the individual origins to the spatiotemporal centers of mass for each entity. 3) BatchNorm: Apply an additional BatchNorm operation immediately when batches of samples are fed into the model. 4) ER (Entity Rearrangement~\cite{wen2023interactive}): A technique aims to eliminate the orderliness of entities for interaction modelling. 5) Aug: Apply an additional data augmentation by randomly shifting the skeleton sequences.
6) S2CoM\dag: Shift the origin to the spatiotemporal center of mass. 7) S2CoM\dag/STD: Scale according to the channel-wise standard deviations after applying S2CoM\dag. Results in Table \ref{ablation:alternatives} indicate that \uline{CHASE can outperform these alternatives by bringing the largest accuracy improvement to the vanilla CTR-GCN}.

\begin{figure}[t]
    \begin{center}
    \includegraphics[width=0.9\linewidth]{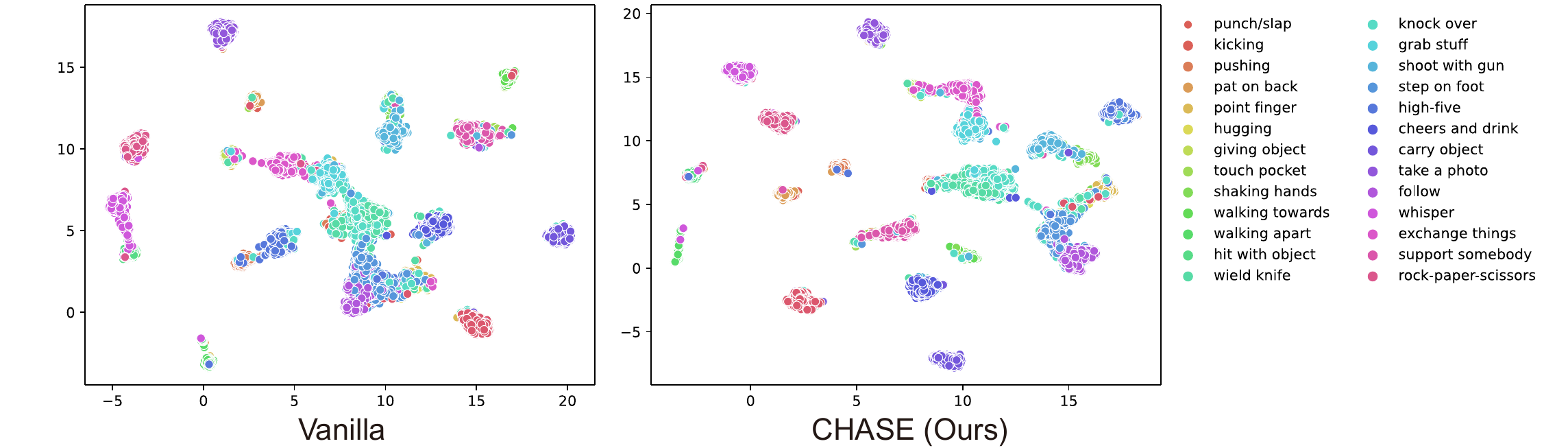}   
    \end{center}
    \vspace{-1.0em}
    \caption{\textbf{UMAP~\cite{mcinnes2018umap-software} visualizations of multi-entity skeleton sequence representations on the test split of NTU Mutual 26 X-Sub}. Compared with Vanilla, our proposed CHASE differentiate similar multi-entity actions better by assisting backbones to learn more distinctive representations.}
    \label{sup-fig:umap}
    \vspace{-0.5em}
\end{figure}

\textbf{Analysis of inter-entity distribution discrepancies.} Table~\ref{discrepancy} presents metrics evaluating the inter-entity distribution discrepancies on test sets, including Averaged Kullback-Leibler Divergence (Avg KLD), Jensen-Shannon Divergence (JSD), Bhattacharyya Distance (BD), Hellinger Distance (HD) and MMD. We measure the pair-wise distributions of sampled data points from different entities in test sets of NTU Mutual 11 X-Sub (\uppercase\expandafter{\romannumeral1}), X-View (\uppercase\expandafter{\romannumeral2}) and NTU Mutual 26 X-Sub (\uppercase\expandafter{\romannumeral3}), X-Set (\uppercase\expandafter{\romannumeral4}). 
Table~\ref{discrepancy} demonstrates that \uline{CHASE significantly minimizes discrepancies across all evaluation metrics, thereby benefiting backbone learning for each entity in multi-entity actions}.

\begin{wraptable}{r}{0.6\textwidth}
	\centering
        \vspace{-1.5em}
	\caption{Analysis of Key Components in CHASE}
	\label{ablation:key}
        \resizebox{0.6\textwidth}{!}{
	\begin{tabular}{c|c|c|c|c|c|c}
            \hline
                \multicolumn{2}{c|}{ICHAS} & 
                \multirow{2}{*}{CLB}&
                \multirow{2}{*}{MPMMD}&
                \multirow{2}{*}{lr}&
                \multirow{2}{*}{Acc (\%)} & \multirow{2}{*}{\(\Delta\) (\%)}\\
            \cline{1-2}
                AS & CHC &  &  &  & &\\
            \hline
                \Checkmark & \Checkmark & \Checkmark & \Checkmark & 0.1 & \(\bm{{91.30}}_{(\pm0.22)}\) & - \\
            \hline
                \Checkmark &  & \Checkmark & \Checkmark & 0.1 & \({22.65}_{(\pm0.35)}\) & \(-68.65\)\\
                \Checkmark &  & \Checkmark & \Checkmark & 0.01 & \({86.99}_{(\pm0.16)}\) & \(-4.32\) \\
                \Checkmark & \Checkmark &  & \Checkmark & 0.1 & \({91.20}_{(\pm0.13)}\) & \(-0.10\) \\
                \Checkmark &  &  & \Checkmark & 0.1 & \({22.75}_{(\pm0.12)}\) & \(-68.56\) \\
                \Checkmark &  &  & \Checkmark & 0.01 & \({23.51}_{(\pm0.38)}\) & \(-67.79\)\\
                 & \Checkmark & \Checkmark & \Checkmark & 0.1 & \({20.42}_{(\pm0.09)}\) & \(-70.88\) \\
                \Checkmark & \Checkmark & \Checkmark &  & 0.1 & \({91.17}_{(\pm0.18)}\) & \(-0.13\)\\
                 &  &  &  & 0.1 & \({89.50}_{(\pm0.14)}\) & \(-1.81\) \\
            \hline
	\end{tabular}
        }
        \vspace{-1.5em}
\end{wraptable}
\textbf{Analysis of Key Components}. We validate the effectiveness of each key component in Table~\ref{ablation:key}. When removing the skeleton convex hull constraint (CHC), there is a significant drop in accuracy, exceeding 60\%, for initial learning rates (lr) of 0.1 and 0.01. This substantial decline highlights \uline{the importance of CHC as a critical constraint for learning the adaptive shift}. Additionally, replacing Adaptive Shift (AS) with \(\hat{X}=XWJ_{1,U}\) results in a dramatic decrease in accuracy, indicating that \uline{simply adding an equivalent number of trainable parameters without an adaptive shift formulation is ineffective}. Table~\ref{ablation:key} further shows CHASE also benefits from CLB and MPMMD.

\begin{table}[t]
	\centering
	\caption{Mixed Recognition on NTU RGB+D 120}
	\label{ablation:mix}
        \vspace{-0.5em}
	\begin{tabular}{l|c|c}
            \hline
                Method & X-Sub (\%) & X-Set (\%) \\
            \hline
                CTR-GCN~\cite{CTR-GCN2021} & \({84.95}_{(\pm0.05)}\) & \({86.90}_{(\pm0.03)}\) \\
                \textbf{+ CHASE} & \(\bm{{85.36}}_{(\pm0.05)}\) & \(\bm{{86.95}}_{(\pm0.10)}\) \\
            \hline
	\end{tabular}
        \vspace{-1.0em}
\end{table}
\textbf{Evaluations on Mixed Recognition of Single- \& Multi-Entity Actions}. Table \ref{ablation:mix} shows a 0.41\% improvement in X-Sub accuracy on the entire NTU RGB+D 120. This implies that although CHASE is proposed for multi-entity actions, it is also effective in mixed recognition settings.

\begin{wraptable}{r}{0.4\textwidth}
	\centering
        \vspace{-2.0em}
	\caption{CHASE Trainable Parameters}
	\label{ablation:param}
	\begin{tabular}{l|c}
            \hline
                Method & \# Param. (M) \\
            \hline
                CTR-GCN~\cite{CTR-GCN2021} & \(1.44\)\\
                \textbf{+ CHASE} & \({1.46}^{\uparrow 1.83\%}\) \\ 
                \hline
                STSA-Net~\cite{STSA-Net2023} & \(4.13\)\\
                \textbf{+ CHASE} & \({4.16}^{\uparrow 0.60\%}\) \\
            \hline
	\end{tabular}
        \vspace{-1.5em}
\end{wraptable}
\textbf{Analysis of Efficiency}. As illustrated in Table~\ref{ablation:param}, the number of trainable parameters of CHASE in NTU Mutual 26 configurations is about 26.37 k, resulting in a mere 1\%-2\% parameter increase. For computational complexity, FLOPs of CHASE is approximately 2.50 M. These metrics demonstrate that CHASE is both efficient and lightweight.

\section{Conclusion}
This paper proposes the Convex Hull Adaptive Shift for Multi-Entity Action Recognition (CHASE) to address the inter-entity distribution discrepancies. To the best of our knowledge, we are the first to investigate this problem and leverage discrepancy minimization to unbias the classifiers. Our approach can seamlessly adapt to existing backbone architectures and demonstrate performance improvements across six multi-entity action recognition datasets.

\begin{ack}
This work was supported by Natural Science Foundation of Shenzhen (No. JCYJ20230807120801002) and National Natural Science Foundation of China (No. 62203476, No. 52105079).
\end{ack}

\newpage
\medskip
\normalem
\bibliographystyle{unsrtnat}
\bibliography{Reference}

\newpage
\appendix

\section*{Appendix}
The appendix is organized as follows:

\begin{enumerate}[label=\Alph*.]
    \item Supplementary Analysis of CHASE
    \item Code for CHASE
    \item Details of Multi-Entity Action Recognition Datasets
    \item Evaluation Metrics
    \item Implementation \& Configuration Details
    \item Supplementary Experimental Results
    \item Limitations \& Broader Impacts
    \item Licenses for Used Assets
\end{enumerate}

\section{Supplementary Analysis of CHASE}
\subsection{Preliminaries}
Here we clarify some crucial definitions in the skeleton-based multi-entity action recognition task as follows.

\begin{definition}[Skeleton Sequence of A Multi-Entity Action]
Suppose that \(E\) interactive entities (e.g. persons) engage in a purposeful act during a period of time \(T\), and the pose of each entity is indicated by \(J\) joints with \(C\) Cartesian coordinates. We can define the skeleton sequence of a multi-entity action as \(X \in \mathbb{R}^{C\times T\times J \times E}\).
\end{definition}

\begin{definition}[Skeleton-based Multi-Entity Action Recognition]
We define the task as finding the optimal estimator \(\mathcal{E}_{\theta}\) of the mapping \(\mathcal{E}:X\mapsto Y\), where \(X\) is the skeleton sequence of a multi-entity action and \(Y\) is its corresponding label. 
\end{definition}

\begin{definition}[Joints \& Bones]
A joint within a skeleton is a point in the Cartesian coordinate system, which can also be viewed as a vector \(\Vec{p}_i (1\leq i\leq J)\). A bone within a skeleton is a differential vector of two joints \(\Vec{p}_i\) and \(\Vec{p}_j\) \((1\leq i,j\leq J)\), if and only if the two joints are connected or the same one according to a prior graph (e.g. the bone connection of the human body).
\label{def:jointbone}
\end{definition}

\begin{figure}[h]
    \begin{center}
    \includegraphics[width=0.5\linewidth]{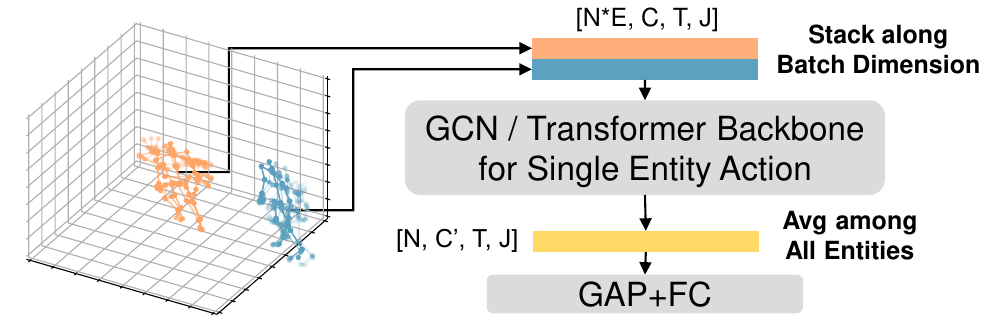}   
    \end{center}
    \caption{\textbf{The common practice in single-entity action recognition models to recognize multi-entity actions.} This late fusion strategy is used in many recent works~\cite{CTR-GCN2021,InfoGCN2022,hdgcn2023,dstanet2020,STSA-Net2023}.}
    \label{fig:sup_common}
\end{figure}

We illustrate the common practice~\cite{CTR-GCN2021,InfoGCN2022,hdgcn2023,dstanet2020,STSA-Net2023} when single-entity action recognition models meet multiple entities using Fig.~\ref{fig:sup_common}. With vanilla world coordinates as input, they usually concatenate each entity along batch dimension, and extract high-dimensional features of each entity separately using GCN or transformer backbone. Subsequently, the individual features get averaged before undergoing global average pooling and full connection layers. Notably, this common practice is based on an empirical assumption that each entity is independent and identically distributed.

\subsection{Adaptive Shift Analysis Under Bone Representation}
Most existing research uses Def.~\ref{def:jointbone} for the bone modality definition~\cite{CTR-GCN2021,2s-AGCN2019,MS-G3D2020,duan2022pyskl}. In this context, the learned adaptive shift is ineffective because shifting the origin does not affect the Euclidean distance between any two points. However, the adaptive shift could still prove effective if bones are defined differently.

\begin{definition}[k-hop Bones~\cite{InfoGCN2022}]
    \label{def:khop}
    We denote $X_t$ as the joints at the moment $t$ in a skeleton sequence. The k-hop Bone $\tilde{X}^{(k)}_t$ at moment $t$ can be formulated as
    \begin{equation}
        \tilde{X}^{(k)}_t = (I-P^k)X_t,
    \end{equation}
    where $k\geq1$, and $P\in\mathbb{R}^{J\times J}$ is a binary adjacency matrix of a directed graph without bi-directional edges.
\end{definition}

In the context of human body skeletons, Def.~\ref{def:khop} uses the joint $j_{r}$, representing \textit{the center of the spine}, as the root to construct the directed graph. If $k=max_vd(v)+1$, where $max_vd(v)$ means the max hop of all joints to the root, then the k-hop Bone $\tilde{X}^{(k)}_t$ aligns with the joint definition in Def.~\ref{def:jointbone}. However, when $k=1$, this equivalence doesn't necessarily hold for the bone definition in Def.~\ref{def:jointbone}. The reason is that some joints (such as the root) may have no in-degree, causing them to retain their original joint coordinates in Def.~\ref{def:khop}. Therefore, the k-hop bones may still contain joint information that can be modified through adaptive shift.

\subsection{Analysis of Gradient}
Consider the expectation \(\mathbb{E}_{r(z)}[g_\theta(z)]\) in Eq.~\ref{eq:MPMMD}, where \(g_\theta\) denotes the composition of MMD (Eq.~\ref{eq:vanimmd}) and ICHAS (Eq.~\ref{eq:ConAdaShf}) with CLB (Eq.~\ref{eq:psi}). We assume that the gradient of \(g_\theta\) with respect to the parameters \(\theta\) exists. Since it adopts uniform sampling, we note that the 
discrete probability density function \(r(z)\) of \(z\) is independent of the parameters \(\theta\). Thus we have
\begin{equation}
    \begin{split} 
    \nabla_\theta \mathbb{E}_{r(z)} [g_\theta(z)]& = \nabla_\theta[\sum_{z}r(z)g_\theta(z)]\\ 
    &= \sum_{z} r(z) [\nabla_\theta g_\theta(z)] \\ 
    &= \mathbb{E}_{r(z)} [\nabla_\theta g_\theta(z)], \end{split}
\end{equation}
which indicates that the gradient of the expectation \(\nabla_\theta \mathbb{E}_{r(z)}[g_\theta(z)]\) is equivalent to the expectation of the gradient \(\mathbb{E}_{r(z)} [\nabla_\theta g_\theta(z)]\). The latter can be approximated using Monte Carlo methods.

\section{Code for CHASE}

\begin{algorithm}[t]
	\renewcommand{\algorithmicrequire}{\textbf{Input:}}
	\renewcommand{\algorithmicensure}{\textbf{Output:}}
	\caption{CHASE Wrapper: PyTorch-like Pseudo Code}  
	\label{algo:chaswrapper}
	\begin{algorithmic}[1]
        \State{class CHASEWrapper(nn.Module):}
        \State{\quad def \_\_init\_\_(self, backbone, in\_channels, num\_frame, num\_point, pooling\_seg, num\_entity, c1, c2):}
        \State{\qquad super(CHASEWrapper, self).\_\_init\_\_()}
        \Statex
        \State{\qquad out\_channel = num\_frame * num\_point * num\_entity}
        \State{\qquad self.pooling\_seg = pooling\_seg}
        \State{\qquad self.seg = self.pooling\_seg[0]*self.pooling\_seg[1]*self.pooling\_seg[2]}
        \State{\qquad self.seg\_num\_list = [(num\_frame//self.pooling\_seg[0]), (num\_point//self.pooling\_seg[1]), (num\_entity//self.pooling\_seg[2])]}
        \State{\qquad self.seg\_num = self.seg\_num\_list[0] * self.seg\_num\_list[1] * self.seg\_num\_list[2]}
        \State{\qquad self.shift = nn.Sequential(}
        \State{\qquad\quad nn.Conv3d(in\_channels=in\_channels, out\_channels=c1, kernel\_size=1),}
        \State{\qquad\quad nn.AdaptiveAvgPool3d((self.pooling\_seg[0], self.pooling\_seg[1], self.pooling\_seg[2])),}
        \State{\qquad\quad nn.Conv3d(c1, c2, 1, bias=False),}
        \State{\qquad\quad nn.ReLU(inplace=True),}
        \State{\qquad\quad nn.Conv3d(c2, out\_channel, 1, bias=False),}
        \State{\qquad )}
        \State{\qquad self.backbone = backbone}
        \Statex
        \State{\quad def forward(self, x):}
        \State{\qquad N, C, T, V, M = x.size()}
        \State{\qquad sf = self.shift(x).view(N, T*M*V, -1)}
        \State{\qquad tx = rearrange(x, 'n c t v m -> n c (t v m)', t=T, m=M, v=V).contiguous()}
        \State{\qquad sf = (tx @ sf.softmax(dim=1)).unsqueeze(-1).expand(-1, -1, self.seg, self.seg\_num)}
        \State{\qquad sf = rearrange(sf, 'n c (T V M) (t v m) -> n c (T t) (V v) (M m)',}
        \State{\qquad\qquad T=self.pooling\_seg[0], V=self.pooling\_seg[1], M=self.pooling\_seg[2],}
        \State{\qquad\qquad t=self.seg\_num\_list[0], v=self.seg\_num\_list[1], m=self.seg\_num\_list[2]).contiguous()}
        \State{\qquad x = x - sf}
        \State{\qquad if self.training:}
        \State{\qquad\quad pairs = MiniBatchSampling(x)}
        \State{\qquad out = self.backbone(x)}
        \Statex
        \State{\qquad if self.training:}
        \State{\qquad\quad return out, pairs}
        \State{\qquad else:}
        \State{\qquad\quad return out}
	\end{algorithmic}
\end{algorithm}

CHASE is implemented as a wrapper for various skeleton-based action backbones, as illustrated in Algorithm~\ref{algo:chaswrapper}. Line 18 indicates input of batch size \(N\), channel \(C\), number of frames \(T\), number of joints \(V\), number of entity \(M\). Line 19 represents Eq.~\ref{eq:psi}, which is to map \(X\) to \(W\) with function \(\psi\). Line 21 represents the formulation of \(\Vec{p^*}\) in Eq.~\ref{eq:xsoftmaxw}. Line 25 indicate the subtraction in Eq.~\ref{eq:sub_matrix}. Line 26-27 are mini-batch sampling strategy for the Mini-batch Pair-wise Maximum Mean Discrepancy Loss. Line 28 represents the single-entity backbone. Our code is publicly available at~\url{https://github.com/Necolizer/CHASE} with MIT license.

\section{Details of Multi-Entity Action Recognition Datasets}

\begin{table*}[t]
	\centering
	\caption{Statistics of Multi-Entity Action Recognition Datasets}
	\label{table:dataset}
        \resizebox{\textwidth}{!}{
        \begin{threeparttable}
	\begin{tabular}{l|c|c|c|c|c|c|c|c|c}
		\hline
            \multirow{2}{*}{Datasets}&\multicolumn{3}{c|}{Annotation}&\multirow{2}{*}{\#Actions}&\multirow{2}{*}{\#Joints}&\multirow{2}{*}{\#Clips}&\multirow{2}{*}{\#Valid Frames}&\multirow{2}{*}{\#Entities}&\multirow{2}{*}{\#Participants}\\	
            \cline{2-4}
            &Body&Hand&Object&&&&&&\\
		\hline
            NTU Mutual 11~\cite{NTU60}&\Checkmark&&&11&25&10,347&69.18&2.00&40\\
            NTU Mutual 26~\cite{NTU120}&\Checkmark&&&26&25&24,732&59.36&2.00&106\\	
            H2O~\cite{H2O_TA-GCN2021}&&\Checkmark&\Checkmark&36&21&933&97.29&3.00&4\\
            Assembly101~\cite{Assembly101}&&\Checkmark&&1,380&21&85,252&105.91&2.00&53\\	
            CAD~\cite{cad2009}&\Checkmark&&&4&17&2,511&10.00&5.22&-\\
            VD~\cite{msibrahi2016VOL}&\Checkmark&&\Checkmark&8&17&4,830&10.00&13.00&-\\
            \hline
	\end{tabular}
       \end{threeparttable}
        }
\end{table*}

We conduct experiments on a range of datasets, including \textbf{NTU Mutual 11} (a subset of \textbf{NTU RGB+D}~\cite{NTU60}), \textbf{NTU Mutual 26} (a subset of \textbf{NTU RGB+D 120}~\cite{NTU120}), \textbf{H2O}~\cite{H2O_TA-GCN2021}, \textbf{Assembly101 (ASB101)}~\cite{Assembly101}, \textbf{Collective Activity Dataset (CAD)}~\cite{cad2009}, and \textbf{Volleyball Dataset (VD)}~\cite{msibrahi2016VOL}. Table~\ref{table:dataset} provides details about each dataset, including their annotation types, numbers of action categories, numbers of joints, numbers of clips (samples), averaged counts of valid frames, counts of entities engaging in a multi-entity action, and numbers of participants in data collection. For CAD~\cite{cad2009} and VD~\cite{msibrahi2016VOL}, which both capture videos in the wild with a variety of individuals, it is difficult to determine the exact number of participants.

\section{Evaluation Metrics}
In this section, we provide the detailed formulation for metrics in ablation study. Given two discrete probability distributions \(P\) and \(Q\) defined on the same sample space \(\mathcal{X}\), we can define the following metrics:

\textbf{Averaged Kullback-Leibler Divergence (Avg KLD)}:
\begin{equation}
\begin{split}
    Avg\ D_{KL} & = \frac{1}{2}[D_{KL}(P\Vert Q)+D_{KL}(Q\Vert P)] \\
     & = \frac{1}{2}[\sum_{x\in \mathcal{X}}P(x)\log(\frac{P(x)}{Q(x)})+\sum_{x\in \mathcal{X}}Q(x)\log(\frac{Q(x)}{P(x)})].
\end{split}
\end{equation}

\textbf{Jensen-Shannon Divergence (JSD)}:
\begin{equation}
    JSD(P\Vert Q) = \frac{1}{2} D(P\Vert M)+\frac{1}{2} D(Q\Vert M),
\end{equation}
where \(M=\frac{1}{2}(P+Q)\) is a mixture distribution of \(P\) and \(Q\).

\textbf{Bhattacharyya Distance (BD)}:
\begin{equation}
    D_B(P,Q)=-\ln{\sum_{x\in \mathcal{X}}\sqrt{P(x)Q(x)}}.
\end{equation}

\textbf{Hellinger Distance (HD)}: 
\begin{equation}
    H(P,Q)=\frac{1}{\sqrt{2}}\sqrt{\sum_{x\in \mathcal{X}}(\sqrt{P(x)}-\sqrt{Q(x)})^2}.
\end{equation}

\textbf{Maximum Mean Discrepancy (MMD)}:
\begin{equation}
    \text{MMD}(P,Q)=\sup _{\|f\|_{\mathcal {H}}\leq 1}\left(\mathbb {E} [f(X)]-\mathbb {E} [f(Y)]\right),
\end{equation}
where \(\mathbb{E} [f(\cdot)]\) is the expectation of any function \(f\) in the RKHS \(\mathcal{H}\).

When adopting these metrics in our experiments, the distributions are generated by the kernel density estimation (KDE). We sample the points 30 times with different seed initializations and report the averaged measurements. 

\section{Implementation \& Configuration Details}
\label{appendix:details}

\begin{figure}[t]
    \begin{center}
    \includegraphics[width=\linewidth]{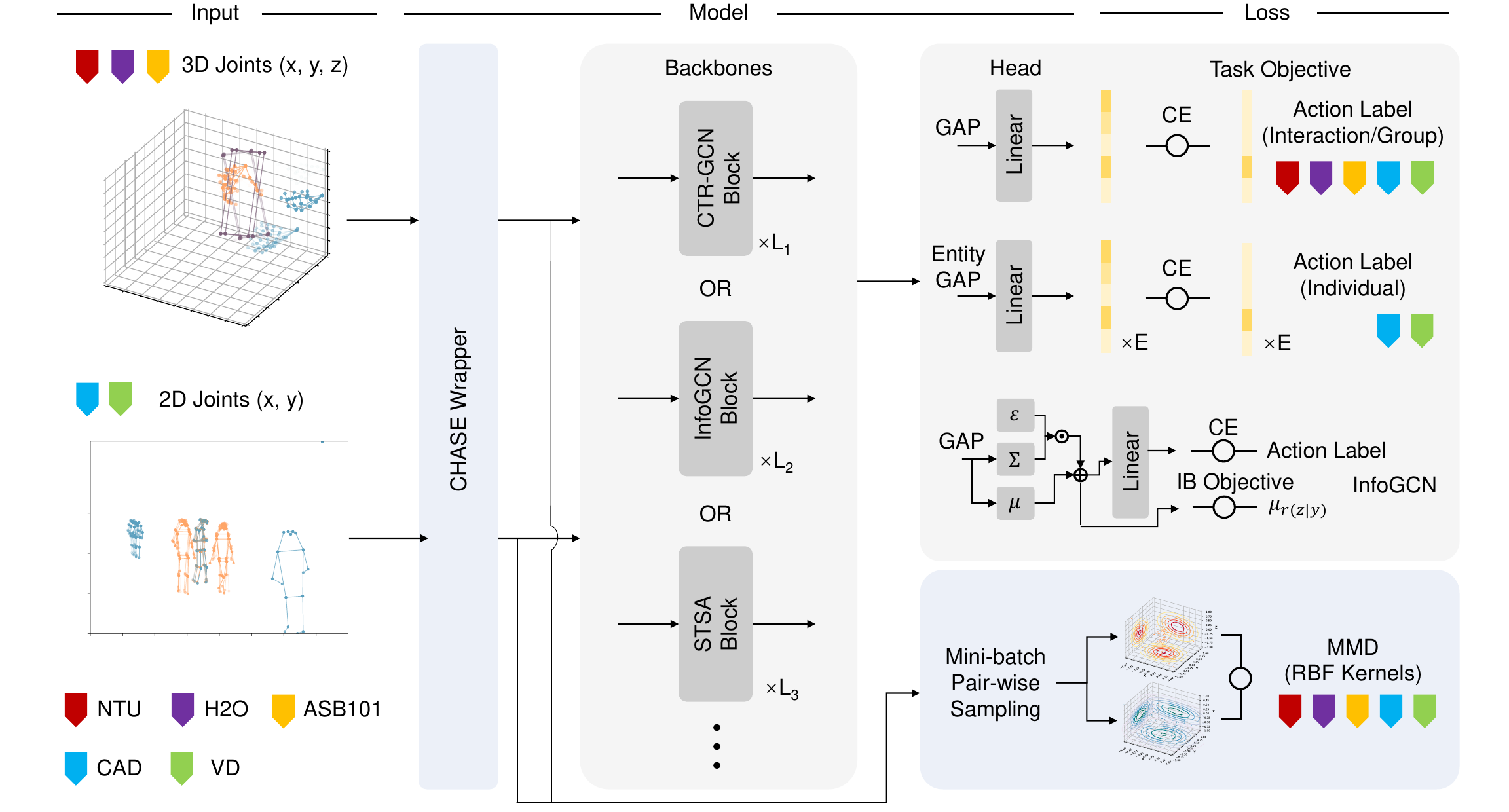}   
    \end{center}
    \vspace{-1.0em}
    \caption{\textbf{Implementation details for different models and benchmarks.} CHASE can be adapted to various backbones including CTR-GCN~\cite{CTR-GCN2021}, InfoGCN~\cite{InfoGCN2022}, STSA-Net~\cite{STSA-Net2023}, and HD-GCN~\cite{hdgcn2023}. We implement linear classification heads for all the models and benchmarks, except for InfoGCN~\cite{InfoGCN2022}. Individual labels are utilized as an auxiliary classification objective to further improve the group action recognition performance in CAD~\cite{cad2009} and ASB101~\cite{Assembly101}.}
    \label{fig:sup_detail}
    \vspace{-1.0em}
\end{figure}

In this section, we provide more details of our experimental setup and model implementation for each benchmark. Experiments are conducted with 8 GeForce RTX 3070 GPUs (GPU Memory: 8GB), using torch version 1.9.0+cu111, torchvision version 0.10.0+cu111, and CUDA version 11.4. CTR-GCN~\cite{CTR-GCN2021}, InfoGCN~\cite{InfoGCN2022}, STSA-Net~\cite{STSA-Net2023} and HD-GCN~\cite{hdgcn2023} are chosen as our baseline models. To ensure fair comparisons, we adopt single intra-skeleton modality without multi-modality fusion, following~\cite{wen2023interactive}. Our implementation details are illustrated in Fig.~\ref{fig:sup_detail}.

\subsection{Dataset-related Configurations}

\textbf{NTU Mutual 11}~\cite{NTU60} \textbf{\& NTU Mutual 26}~\cite{NTU120}. X-Sub and X-View criteria~\cite{NTU60} are adopted in NTU Mutual 11, while X-Sub and X-Set criteria~\cite{NTU120} are used in NTU Mutual 26. We evaluate models with only 3D joint inputs, applying data augmentations such as random rotation and spatial shift. During training and testing, we employ temporal cropping and resizing, adjusting based on the number of valid frames. Notably, we use distinct percentage intervals for training (0.5,1) and testing (0.95). For experiments conducted on the entire NTU-RGB+D 120 dataset, we maintain identical settings as those used for NTU Mutual 26.

\textbf{H2O}~\cite{H2O_TA-GCN2021}. We follow the training, validation, and test splits described in~\cite{H2O_TA-GCN2021}. To maintain consistency in GCNs, we use the hand graph structure, originally designed for human hands, for both hand entities and object entities. Models are evaluated with only 3D joint inputs. The same augmentations as NTU Mutual 26 are adopted in this benchmark. 

\textbf{ASB101}~\cite{Assembly101}. We follow the training, validation, and test splits outlined in~\cite{Assembly101} for evaluations. 1,380 Fine-grained actions (verb \& noun) are adopted as labels in experiments. We evaluate models with only 3D joint inputs. The same augmentations as NTU Mutual 26 are adopted in this dataset, except that the training percentage interval of the temporal cropping and resizing is set to (0.75,1).

\textbf{CAD}~\cite{cad2009}. We adopt the same data augmentations, group action categories, individual labels and train-test split in~\cite{zhou2022composer}. But different from~\cite{zhou2022composer}, only 2D joint coordinates are used in our experiments. Individual labels are used as an auxiliary classification objective, as presented in Fig.~\ref{fig:sup_detail}. 

\textbf{VD}~\cite{msibrahi2016VOL}. We follow the same data augmentations, group action categories, individual labels and Original train-test split in~\cite{zhou2022composer}. But different from~\cite{zhou2022composer}, only 2D joint coordinates are used as input features in experiments. Besides, individual labels are leveraged as an auxiliary classification objective to further improve the group action recognition performance, as shown in Fig.~\ref{fig:sup_detail}. The volleyball position is represented as \(X \in \mathbb{R}^{T\times C}\). To ensure inter-entity consistency, we apply padding to fit the shape \(X \in \mathbb{R}^{C\times T\times J \times 1}\).
To maintain consistency in GCNs, we employ human body graph structure for both human body entities and volleyball entities.

\subsection{Model-related Configurations}
\textbf{CHASE}. We set default segment size (\(1,1,1\)), \(C_1=64\), \(C_2=8\), \(M=1\) and \(\lambda=0.1\) in CHASE. In all experiments, we maintain consistent configurations for baseline models to ensure fair comparisons between models incorporating CHASE and their respective vanilla counterparts. To avoid unnecessary verbosity, we present implementation specifics solely for NTU Mutual 26. For training details pertaining to other benchmarks, please refer to the code repository at~\url{https://github.com/Necolizer/CHASE}.

\textbf{CTR-GCN}~\cite{CTR-GCN2021}. Cross entropy is used as the recognition loss function. SGD optimizer is used with Nesterov momentum of 0.9, a initial learning rate of 0.1 and a decay rate 0.1 at the 80th and 100th epoch. Batch size is set to 64. With the first 5 warm-up epochs, the training process is terminated after 110 epochs. 

\textbf{InfoGCN}~\cite{InfoGCN2022}. By taking \(k=1\), InfoGCN adopts their definition of 1-hop Bones~\cite{InfoGCN2022} as input. Cross entropy is used as the loss function with label smoothing factor 0.1 and temperature factor 1.0. The information bottleneck objective~\cite{InfoGCN2022} is also employed as the auxiliary loss. Following~\cite{InfoGCN2022}, we set \(\lambda_1=0.1\), \(\lambda_2=0.0001\), and the \(\mu_r(z|y)\) of each action class as random orthogonal vectors with a scale of 3. 
Diverging from conventional backbones, we substitute the standard classification head with the InfoGCN head, depicted in Fig.~\ref{fig:sup_detail}. SGD optimizer is used with Nesterov momentum of 0.9, a weight decay of 0.0005, a initial learning rate of 0.1 and a decay rate 0.1 at the 90th and 100th epoch. Batch size is set to 120. With the first 5 warm-up epochs, the training process is terminated after 110 epochs.

\textbf{STSA-Net}~\cite{STSA-Net2023}. Cross entropy is used as the loss function with label smoothing factor 0.1 and temperature factor 1.0. SGD optimizer is used with Nesterov momentum of 0.9, a initial learning rate of 0.1 and a decay rate 0.1 at the 60th and 90th epoch. Batch size is set to 64. With the first 5 warm-up epochs, the training process is terminated after 110 epochs. 

\textbf{HD-GCN}~\cite{hdgcn2023}. We set \(CoM=1\) in the hierarchy graph generation and evaluate on this setting. Cross entropy is used as the loss function. SGD optimizer is used with Nesterov momentum of 0.9, a initial learning rate of 0.1 and a decay rate 0.1 at the 80th and 100th epoch. Batch size is set to 64. With the first 5 warm-up epochs, the training process is terminated after 110 epochs. 

\section{Supplementary Experimental Results}

\textbf{More Analysis on Table~\ref{sota}.} In Table~\ref{sota}, we observe that CHASE yields varying degrees of accuracy improvement across different baseline models and benchmarks. The performance gains are influenced by both the backbone models and the datasets, as CHASE functions as an additional normalization step that mitigates data bias introduced by inter-entity distribution discrepancies. For baseline backbones, this is owing to differences in their backbone architecture design, parameter count and training objective. For example, STSA-Net~\cite{STSA-Net2023} is a relatively large backbone based on transformer architecture, which does not rely the prior definition of the skeleton graphs. Its adaptability to different graph structures makes it outweigh GCN-based backbones in H2O benchmark. Another example is InfoGCN~\cite{InfoGCN2022}, which leverages an auxiliary information bottleneck objective in its training. Though this method is proven more effective than the other backbones in some person-to-person interaction settings, it doesn't ensure better performance in hand-to-object interaction and group activity benchmarks. For different benchmarks, it is owing to differences in data scale and label space (see Table~\ref{table:dataset}). For example, ASB101 is an extremely challenging benchmark for its over 80,000 samples and 1,380 target categories. Therefore the accuracy improvement is modest compared with the other benchmarks.

\begin{figure}[t]
    \begin{center}
    \includegraphics[width=\linewidth]{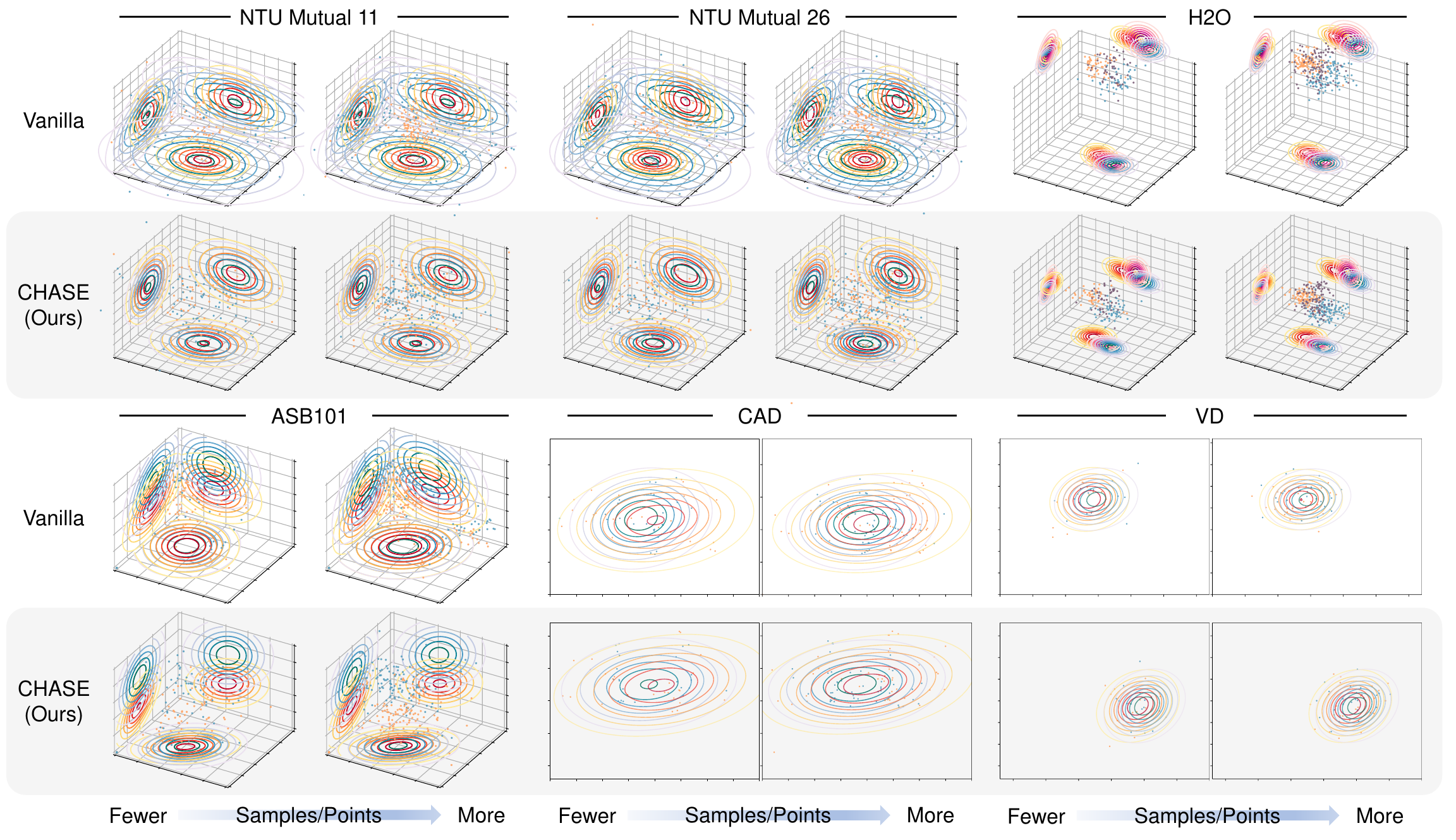}   
    \end{center}
    \vspace{-1.0em}
    \caption{\textbf{Qualitative results on multi-entity action recognition datasets}. For visual clarity, we display 2 or 3 estimated entity distributions for each test set. Each subplot shows the projection of the multi-variant normal distributions generated by mean vectors and covariance matrices. Different entity distributions are denoted by distinct colors. CHASE effectively mitigates inter-entity distribution discrepancies while preserving potential entity orderliness in these datasets.}
    \label{sup-fig:vizchas}
    \vspace{-1.0em}
\end{figure}

\textbf{More Qualitative Results.} Fig.~\ref{sup-fig:vizchas} visualizes how CHASE works with multi-entity skeletal sequences. By integrating CHASE, different entity distributions become more similar in both aspects of mean and covariance. It demonstrates that CHASE can lower the inter-entity distribution discrepancy, especially obviously in NTU Mutual 11, NTU Mutual 26, CAD, whose entities have no orderliness. For H2O, ASB101 and VD, whose entities are characterized by an intrinsic order (e.g. left hands, right hands, left-side volleyball players, right-side volleyball players, and objects), CHASE can also preserve their orderliness by letting the distributions be similar but different.

\textbf{Evaluations on Mixed Recognition of Single-Entity \& Multi-Entity Actions.} Table~\ref{ablation:mix} concludes the action recognition results on the entire NTU RGB+D 120~\cite{NTU120}. By integrating CHASE, the baseline model gets accuracy improvement by 0.41\% and 0.05\% on X-Sub and X-Set, respectively. This suggests that CHASE is effective even in mixed recognition settings. However, the improvement is modest because single actions are the dominant category in this dataset.

\begin{wraptable}{r}{0.4\textwidth}
	\centering
        \vspace{-1.5em}
	\caption{CHASE Trainable Parameters}
	\label{sup-table:param}
	\begin{tabular}{l|c}
            \hline
                Method & \# Param. (M) \\
            \hline
                CTR-GCN~\cite{CTR-GCN2021} & \(1.44\)\\
                \textbf{+ CHASE} & \({1.46}^{\uparrow 1.83\%}\) \\ 
                \hline
                InfoGCN~\cite{InfoGCN2022} & \(1.54\)\\
                \textbf{+ CHASE} & \({1.57}^{\uparrow 1.96\%}\)\\
                \hline
                STSA-Net~\cite{STSA-Net2023} & \(4.13\)\\
                \textbf{+ CHASE} & \({4.16}^{\uparrow 0.60\%}\) \\
                \hline
                HD-GCN~\cite{hdgcn2023} & \(1.65\)\\
                \textbf{+ CHASE} & \({1.68}^{\uparrow 1.60\%}\) \\ 
            \hline
	\end{tabular}
        \vspace{-1.0em}
\end{wraptable}
\textbf{Analysis on Efficiency}. We analyse CHASE's efficiency on NTU Mutual 26 configurations. Our proposed CHASE is implemented as a backbone wrapper, adaptable to a variety of single-entity action models. As presented in Table~\ref{sup-table:param}, the number of trainable parameters is about 26.37 k, which only increases number of backbone's parameters by 1\%-2\%. We can approximate that the number of trainable parameters is increased by \((U+1+C_2)\times C_1 + C_2 \times U\). For computational complexity, the FLOPs of CHASE is approximately 2.50 M. This analysis proves that CHASE is both efficient and lightweight for benefiting multi-entity action learning.

\begin{wraptable}{r}{0.4\textwidth}
	\centering
        \vspace{-1.5em}
	\caption{Ablation on Segment Sizes}
	\label{sup-table:seg}
	\begin{tabular}{c|c}
            \hline
                (\(T', J', E'\)) & Acc (\%) \\
            \hline
                 \textbf{(}\(\bm{1, 1, 1}\)\textbf{)} & \(\bm{{91.30}}_{(\pm0.22)}\)\\ 
                \hline
                 (\(1,1,2\)) & \({91.28}_{(\pm0.19)}\)\\
                \hline
                 (\(2,1,1\)) & \({91.20}_{(\pm0.04)}\)\\
                \hline
                 (\(4,1,1\)) & \({91.03}_{(\pm0.09)}\)\\
                \hline
                 (\(1,5,1\)) & \({91.23}_{(\pm0.05)}\)\\
            \hline
	\end{tabular}
        \vspace{-1.0em}
\end{wraptable}
\textbf{Segment Size of Squeeze Operator.} In CLB, which is also the mapping \(\psi\) to weight matrix, the squeeze operator squeezes the tensor to a specific shape, denoted as (\(T', J', E'\)). This segment size determines the point set in a multi-entity action sequence to which ICHAS applies. For example, the segment size (\(1,1,1\)) indicates that ICHAS applies to all the points, and (\(1,1,2\)) means two different ICHAS apply to two entities separately. Table~\ref{sup-table:seg} evaluates various segment sizes of the squeeze operator, demonstrating that the global ICHAS with the segment size (\(1,1,1\)) achieves the best performance compared with the other settings. Therefore, we choose the default segment size as (\(1,1,1\)) in all the experiments. Besides, the reduction ratio between \(C_1\) and \(C_2\) is determined according to the experimental results in~\cite{hu2019senet}.

\begin{figure}[t]
    \begin{center}
    \includegraphics[width=\linewidth]{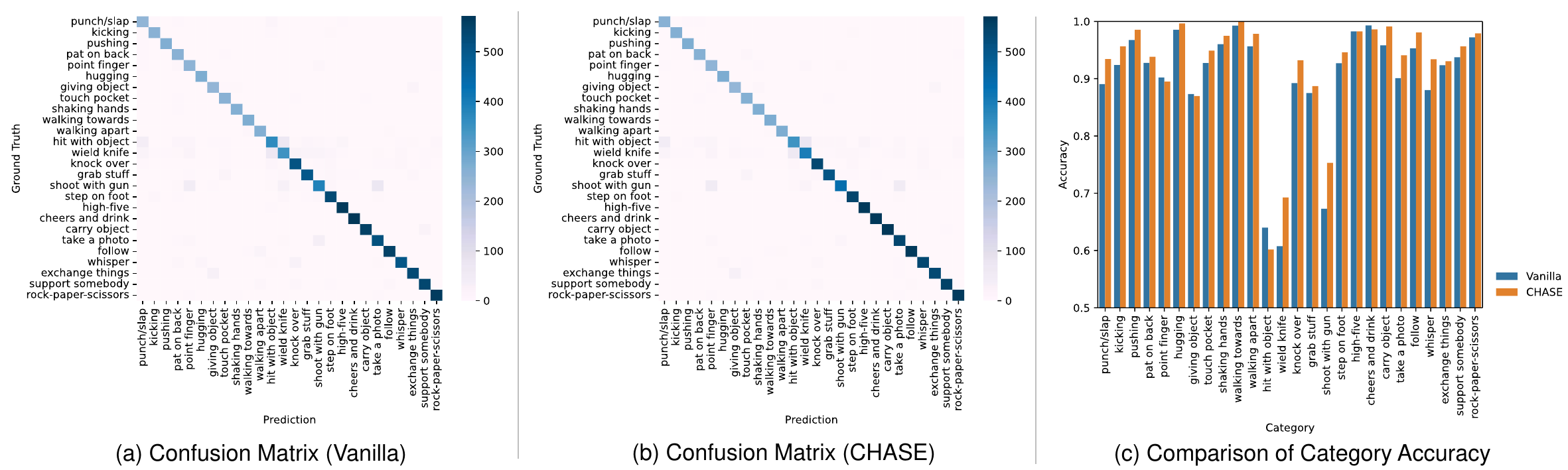}   
    \end{center}
    \vspace{-1.0em}
    \caption{\textbf{Confusion Matrices \& Comparison of Category-level Accuracy on NTU Mutual 26 X-Sub}. (a) Confusion matrix of the vanilla CTR-GCN. (b) Confusion matrix of CTR-GCN with CHASE. (c) We present a detailed comparison of the category-level accuracy between the vanilla CTR-GCN (blue) and CTR-GCN with CHASE (orange). These results demonstrate the effectiveness of CHASE by improving recognition accuracy for most categories.}
    \label{sup-fig:confusion}
    \vspace{-2.0em}
\end{figure}

\begin{wrapfigure}{r}{0.3\textwidth}
    \vspace{-2.0em}
    \begin{center}
        \includegraphics[width=0.3\textwidth]{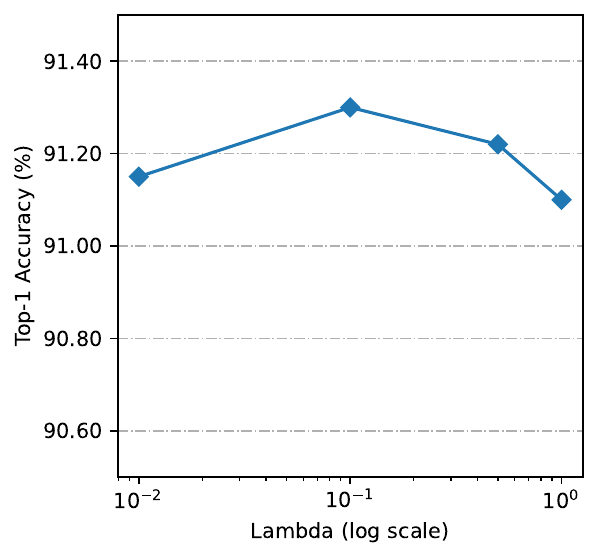}
    \end{center}
    \vspace{-1.5em}
    \caption{\textbf{Ablation study on different \(\lambda\)s for MPMMD.} }
    \label{fig:sup_lambda}
\end{wrapfigure}
\textbf{Weight \(\lambda\) for MPMMD.} Fig.~\ref{fig:sup_lambda} evaluates different values of the trade-off weight factor \(\lambda\) in Eq.~\ref{eq:totalloss}, varying from \(10^{-2}\) to \(10^{0}\). On NTU Mutual 26, it achieve the best performances when adopting \(\lambda=0.1\) for MPMMD loss. It corresponds to our claim that MPMMD is an auxiliary objective to guide discrepancy minimization, additional to the recognition task objective. We also conduct experiments with MPMMD across a variety of \(M\) values but find insignificant differences in performance. Hence, we set \(M=1\) for computational efficiency.

\textbf{Analysis on Confusion Matrix \& Category-level Accuracy.} We present the confusion matrices of the vanilla CTR-GCN and CTR-GCN with CHASE in Fig.~\ref{sup-fig:confusion} (a) \& (b). It indicates that our proposed CHASE is able to assist the backbone to differentiate similar multi-entity actions better. Fig.~\ref{sup-fig:confusion} (c) reports the category-level accuracy for 26 kinds of person-to-person interactions. We observe that in very few categories, their performance slightly drops. One possible reason is that these actions, like \textit{point finger}, rely heavily on cues from the individual movements of one of the entities, instead of the multi-entity interactions. This might not be addressed by mitigating inter-entity distribution discrepancies. But it could still conclude from Fig.~\ref{sup-fig:confusion} (c) that adopting CHASE can achieve accuracy improvement for most of the categories, e.g. \textit{wield knife} (+8.50\%), \textit{shoot with gun} (+8.00\%), \textit{whisper} (+5.39\%), and \textit{punch/slap} (+4.37\%).

\begin{table}[t]
	\centering
	\caption{Analysis of Performances with Test-Time Skeleton Noises and Masking}
	\label{sup:noise}
	\begin{tabular}{l|c|c|c|c}
            \hline
                \multirow{2}{*}{Test-Time} & \multicolumn{2}{c|}{+ Noise (\%)} & \multicolumn{2}{c}{+ Mask (\%)}\\
                \cline{2-5}
                 & \(\sigma=10^{-3}\) & \(\sigma=10^{-2}\) & \(p_m=10^{-2}\) & \(p_m=10^{-1}\)\\
		\hline
                CTR-GCN~\cite{CTR-GCN2021} & \({88.55}_{(\pm0.03)}\) & \({80.72}_{(\pm0.03)}\) & \({81.15}_{(\pm0.13)}\) & \({56.37}_{(\pm0.13)}\) \\
                \textbf{+ CHASE} & \(\bm{{91.24}}_{(\pm0.01)}\) & \(\bm{{82.53}}_{(\pm0.08)}\) & \(\bm{{88.57}}_{(\pm0.03)}\) & \(\bm{{60.65}}_{(\pm0.07)}\) \\
            \hline   
	\end{tabular}
        \vspace{-1.0em}
\end{table}

\textbf{Analysis of Performances with Test-Time Skeleton Noises and Masking.} To evaluate the robustness, we intentionally corrupt the multi-entity skeleton sequences with noise and masking during the inference phase. This aims at resembling possible skeleton occlusions or estimation errors during the test time. The noises \(X_n\sim \mathcal{N}(\mu,\,\sigma^{2})\) used in this experiment are normally distributed with mean \(\mu=0\) and standard deviations \(\sigma=10^{-3}, 10^{-2}\). Masking strategies are randomly masking the multi-entity skeleton sequences with probabilities \(p_m=10^{-2},10^{-1}\). Table~\ref{sup:noise} reports the averaged top-1 accuracy and its standard deviation in runs with several seed initializations for noises and masks. For recognizing multi-entity actions with corrupted test-time inputs, CTR-GCN integrating with CHASE consistently outperforms the vanilla counterpart, showcasing its robustness.

\section{Limitations \& Broader Impacts}
\label{appendix:limitimpact}
This work proposes the Convex Hull Adaptive Shift for Multi-Entity Action Recognition to resolve the inter-entity distribution discrepancies. Although CHASE offers a generic framework for various types of multi-entity actions, such as person-to-person interactions, hand-to-object interactions, hand-to-hand interactions and group activities, its application to single-entity actions warrants further investigation. One potential approach is to consider different parts as multiple entities, such as treating different limbs of a human body as distinct entities. Moreover, it's promising to apply CHASE-like designs to the recently-developed human-centric foundation models~\cite{Zhu_2023_ICCV,Hong_2022_CVPR,Chen_2023_CVPR,Tang_2023_CVPR,yuan2023hap,Ci_2023_CVPR,wang2023hulk} to enhance their performances on multi-entity skeletal data. These areas of exploration are left for future research.

This paper focuses on multi-entity action recognition, a field with broad applications in physical human-robot interaction, social scene understanding, multi-agent systems, surveillance, healthcare
monitoring, etc~\cite{Bagautdinov_2017_CVPR,sun2022interaction,jiang2024scaling,9156959,ren2024spikepoint,Ng_2020_CVPR,jahangard2024jrdbsocial}. Our work contributes to advancements in these domains by enhancing efficient and effective skeleton-based learning of multi-entity actions. Although the use of human-centered data can pose privacy concerns, our study utilizes only skeletal data estimated from sensors or RGB videos, thus mitigating potential privacy and ethical issues.

\section{Licenses for Used Assets}
\label{sup:license}
Datasets: 
\begin{itemize}
    \item NTU Mutual 11 / NTU RGB+D~\cite{NTU60}): Custom (research-only, non-commercial, attribution)~\footnote{\url{http://rose1.ntu.edu.sg/Datasets/requesterAdd.asp?DS=3}}
    \item NTU Mutual 26 / NTU RGB+D 120~\cite{NTU120}: Custom (research-only)~\footnote{\url{http://rose1.ntu.edu.sg/Datasets/actionRecognition.asp}}
    \item H2O~\cite{H2O_TA-GCN2021}: Custom (research-only, non-commercial)~\footnote{\url{https://h2odataset.ethz.ch/}}
    \item Assembly101~\cite{Assembly101}: Creative Commons Attribution-NonCommercial 4.0 International License~\footnote{\url{http://creativecommons.org/licenses/by-nc/4.0/}}
    \item Collective Activity Dataset~\cite{cad2009}: Unknown
    \item Volleyball Dataset~\cite{msibrahi2016VOL}: BSD 2-Clause license~\footnote{\url{https://github.com/mostafa-saad/deep-activity-rec/blob/master/LICENSE}}
\end{itemize}

Models:
\begin{itemize}
    \item CTR-GCN~\cite{CTR-GCN2021}: Creative Commons Attribution-NonCommercial 4.0 International License~\footnote{\url{https://github.com/Uason-Chen/CTR-GCN/blob/main/LICENSE}}
    \item InfoGCN~\cite{InfoGCN2022}: Unknown
    \item STSANet~\cite{STSA-Net2023}: MIT License~\footnote{\url{https://github.com/heleiqiu/STTFormer/blob/main/LICENSE}}
    \item HD-GCN~\cite{hdgcn2023}: MIT License~\footnote{\url{https://github.com/Jho-Yonsei/HD-GCN/blob/main/LICENSE}}
\end{itemize}


\newpage
\section*{NeurIPS Paper Checklist}

\begin{enumerate}

\item {\bf Claims}
    \item[] Question: Do the main claims made in the abstract and introduction accurately reflect the paper's contributions and scope?
    \item[] Answer: \answerYes{} 
    \item[] Justification: The main claims made in the abstract and introduction accurately reflect this paper's contributions and scope.
    \item[] Guidelines:
    \begin{itemize}
        \item The answer NA means that the abstract and introduction do not include the claims made in the paper.
        \item The abstract and/or introduction should clearly state the claims made, including the contributions made in the paper and important assumptions and limitations. A No or NA answer to this question will not be perceived well by the reviewers. 
        \item The claims made should match theoretical and experimental results, and reflect how much the results can be expected to generalize to other settings. 
        \item It is fine to include aspirational goals as motivation as long as it is clear that these goals are not attained by the paper. 
    \end{itemize}

\item {\bf Limitations}
    \item[] Question: Does the paper discuss the limitations of the work performed by the authors?
    \item[] Answer: \answerYes{} 
    \item[] Justification: We discuss the limitations of our work, provided in {\color{blue}Appendix~\ref{appendix:limitimpact}}.
    \item[] Guidelines:
    \begin{itemize}
        \item The answer NA means that the paper has no limitation while the answer No means that the paper has limitations, but those are not discussed in the paper. 
        \item The authors are encouraged to create a separate "Limitations" section in their paper.
        \item The paper should point out any strong assumptions and how robust the results are to violations of these assumptions (e.g., independence assumptions, noiseless settings, model well-specification, asymptotic approximations only holding locally). The authors should reflect on how these assumptions might be violated in practice and what the implications would be.
        \item The authors should reflect on the scope of the claims made, e.g., if the approach was only tested on a few datasets or with a few runs. In general, empirical results often depend on implicit assumptions, which should be articulated.
        \item The authors should reflect on the factors that influence the performance of the approach. For example, a facial recognition algorithm may perform poorly when image resolution is low or images are taken in low lighting. Or a speech-to-text system might not be used reliably to provide closed captions for online lectures because it fails to handle technical jargon.
        \item The authors should discuss the computational efficiency of the proposed algorithms and how they scale with dataset size.
        \item If applicable, the authors should discuss possible limitations of their approach to address problems of privacy and fairness.
        \item While the authors might fear that complete honesty about limitations might be used by reviewers as grounds for rejection, a worse outcome might be that reviewers discover limitations that aren't acknowledged in the paper. The authors should use their best judgment and recognize that individual actions in favor of transparency play an important role in developing norms that preserve the integrity of the community. Reviewers will be specifically instructed to not penalize honesty concerning limitations.
    \end{itemize}

\item {\bf Theory Assumptions and Proofs}
    \item[] Question: For each theoretical result, does the paper provide the full set of assumptions and a complete (and correct) proof?
    \item[] Answer: \answerYes{} 
    \item[] Justification: We provide a proof of our proposition 1 in Section~\ref{sec:ICHAS}.
    \item[] Guidelines:
    \begin{itemize}
        \item The answer NA means that the paper does not include theoretical results. 
        \item All the theorems, formulas, and proofs in the paper should be numbered and cross-referenced.
        \item All assumptions should be clearly stated or referenced in the statement of any theorems.
        \item The proofs can either appear in the main paper or the supplemental material, but if they appear in the supplemental material, the authors are encouraged to provide a short proof sketch to provide intuition. 
        \item Inversely, any informal proof provided in the core of the paper should be complemented by formal proofs provided in appendix or supplemental material.
        \item Theorems and Lemmas that the proof relies upon should be properly referenced. 
    \end{itemize}

    \item {\bf Experimental Result Reproducibility}
    \item[] Question: Does the paper fully disclose all the information needed to reproduce the main experimental results of the paper to the extent that it affects the main claims and/or conclusions of the paper (regardless of whether the code and data are provided or not)?
    \item[] Answer: \answerYes{} 
    \item[] Justification: We describe our proposed CHASE clearly and fully in Section~\ref{sec:ICHAS}, Section~\ref{sec:psi} and Section~\ref{sec:mpmmd}. We illustrate our experimental setting and details in Section~\ref{sec:datasetting} and {\color{blue}Appendix~\ref{appendix:details}}. Our code is publicly available at~\url{https://github.com/Necolizer/CHASE}.
    \item[] Guidelines:
    \begin{itemize}
        \item The answer NA means that the paper does not include experiments.
        \item If the paper includes experiments, a No answer to this question will not be perceived well by the reviewers: Making the paper reproducible is important, regardless of whether the code and data are provided or not.
        \item If the contribution is a dataset and/or model, the authors should describe the steps taken to make their results reproducible or verifiable. 
        \item Depending on the contribution, reproducibility can be accomplished in various ways. For example, if the contribution is a novel architecture, describing the architecture fully might suffice, or if the contribution is a specific model and empirical evaluation, it may be necessary to either make it possible for others to replicate the model with the same dataset, or provide access to the model. In general. releasing code and data is often one good way to accomplish this, but reproducibility can also be provided via detailed instructions for how to replicate the results, access to a hosted model (e.g., in the case of a large language model), releasing of a model checkpoint, or other means that are appropriate to the research performed.
        \item While NeurIPS does not require releasing code, the conference does require all submissions to provide some reasonable avenue for reproducibility, which may depend on the nature of the contribution. For example
        \begin{enumerate}
            \item If the contribution is primarily a new algorithm, the paper should make it clear how to reproduce that algorithm.
            \item If the contribution is primarily a new model architecture, the paper should describe the architecture clearly and fully.
            \item If the contribution is a new model (e.g., a large language model), then there should either be a way to access this model for reproducing the results or a way to reproduce the model (e.g., with an open-source dataset or instructions for how to construct the dataset).
            \item We recognize that reproducibility may be tricky in some cases, in which case authors are welcome to describe the particular way they provide for reproducibility. In the case of closed-source models, it may be that access to the model is limited in some way (e.g., to registered users), but it should be possible for other researchers to have some path to reproducing or verifying the results.
        \end{enumerate}
    \end{itemize}

\item {\bf Open access to data and code}
    \item[] Question: Does the paper provide open access to the data and code, with sufficient instructions to faithfully reproduce the main experimental results, as described in supplemental material?
    \item[] Answer: \answerYes{} 
    \item[] Justification: Our code is publicly available at~\url{https://github.com/Necolizer/CHASE}. We have provided sufficient instructions to reproduce the main experimental results.
    \item[] Guidelines:
    \begin{itemize}
        \item The answer NA means that paper does not include experiments requiring code.
        \item Please see the NeurIPS code and data submission guidelines (\url{https://nips.cc/public/guides/CodeSubmissionPolicy}) for more details.
        \item While we encourage the release of code and data, we understand that this might not be possible, so “No” is an acceptable answer. Papers cannot be rejected simply for not including code, unless this is central to the contribution (e.g., for a new open-source benchmark).
        \item The instructions should contain the exact command and environment needed to run to reproduce the results. See the NeurIPS code and data submission guidelines (\url{https://nips.cc/public/guides/CodeSubmissionPolicy}) for more details.
        \item The authors should provide instructions on data access and preparation, including how to access the raw data, preprocessed data, intermediate data, and generated data, etc.
        \item The authors should provide scripts to reproduce all experimental results for the new proposed method and baselines. If only a subset of experiments are reproducible, they should state which ones are omitted from the script and why.
        \item At submission time, to preserve anonymity, the authors should release anonymized versions (if applicable).
        \item Providing as much information as possible in supplemental material (appended to the paper) is recommended, but including URLs to data and code is permitted.
    \end{itemize}

\item {\bf Experimental Setting/Details}
    \item[] Question: Does the paper specify all the training and test details (e.g., data splits, hyperparameters, how they were chosen, type of optimizer, etc.) necessary to understand the results?
    \item[] Answer: \answerYes{} 
    \item[] Justification: We illustrate our experimental setting and details in Section~\ref{sec:datasetting} and {\color{blue}Appendix~\ref{appendix:details}}. We also provide all the configuration files in our code.
    \item[] Guidelines:
    \begin{itemize}
        \item The answer NA means that the paper does not include experiments.
        \item The experimental setting should be presented in the core of the paper to a level of detail that is necessary to appreciate the results and make sense of them.
        \item The full details can be provided either with the code, in appendix, or as supplemental material.
    \end{itemize}

\item {\bf Experiment Statistical Significance}
    \item[] Question: Does the paper report error bars suitably and correctly defined or other appropriate information about the statistical significance of the experiments?
    \item[] Answer: \answerYes{} 
    \item[] Justification: Table~\ref{sota} shows the experimental results on different benchmarks, reporting the averaged top-1 accuracy and its standard deviation in runs with several seed initializations. In Table~\ref{discrepancy}, we sample the points 30 times with different seed initializations and report the averaged measurements with standard deviations. 
    \item[] Guidelines:
    \begin{itemize}
        \item The answer NA means that the paper does not include experiments.
        \item The authors should answer "Yes" if the results are accompanied by error bars, confidence intervals, or statistical significance tests, at least for the experiments that support the main claims of the paper.
        \item The factors of variability that the error bars are capturing should be clearly stated (for example, train/test split, initialization, random drawing of some parameter, or overall run with given experimental conditions).
        \item The method for calculating the error bars should be explained (closed form formula, call to a library function, bootstrap, etc.)
        \item The assumptions made should be given (e.g., Normally distributed errors).
        \item It should be clear whether the error bar is the standard deviation or the standard error of the mean.
        \item It is OK to report 1-sigma error bars, but one should state it. The authors should preferably report a 2-sigma error bar than state that they have a 96\% CI, if the hypothesis of Normality of errors is not verified.
        \item For asymmetric distributions, the authors should be careful not to show in tables or figures symmetric error bars that would yield results that are out of range (e.g. negative error rates).
        \item If error bars are reported in tables or plots, The authors should explain in the text how they were calculated and reference the corresponding figures or tables in the text.
    \end{itemize}

\item {\bf Experiments Compute Resources}
    \item[] Question: For each experiment, does the paper provide sufficient information on the computer resources (type of compute workers, memory, time of execution) needed to reproduce the experiments?
    \item[] Answer: \answerYes{} 
    \item[] Justification: We provided sufficient information on the computer resources in {\color{blue}Appendix~\ref{appendix:details}}. Experiments are conducted with 8 GeForce RTX 3070 GPUs (GPU Memory: 8GB), using torch version 1.9.0+cu111, torchvision version 0.10.0+cu111, and CUDA version 11.4.
    \item[] Guidelines:
    \begin{itemize}
        \item The answer NA means that the paper does not include experiments.
        \item The paper should indicate the type of compute workers CPU or GPU, internal cluster, or cloud provider, including relevant memory and storage.
        \item The paper should provide the amount of compute required for each of the individual experimental runs as well as estimate the total compute. 
        \item The paper should disclose whether the full research project required more compute than the experiments reported in the paper (e.g., preliminary or failed experiments that didn't make it into the paper). 
    \end{itemize}
    
\item {\bf Code Of Ethics}
    \item[] Question: Does the research conducted in the paper conform, in every respect, with the NeurIPS Code of Ethics \url{https://neurips.cc/public/EthicsGuidelines}?
    \item[] Answer: \answerYes{} 
    \item[] Justification: This research conforms in every respect with the NeurIPS Code of Ethics.
    \item[] Guidelines:
    \begin{itemize}
        \item The answer NA means that the authors have not reviewed the NeurIPS Code of Ethics.
        \item If the authors answer No, they should explain the special circumstances that require a deviation from the Code of Ethics.
        \item The authors should make sure to preserve anonymity (e.g., if there is a special consideration due to laws or regulations in their jurisdiction).
    \end{itemize}

\item {\bf Broader Impacts}
    \item[] Question: Does the paper discuss both potential positive societal impacts and negative societal impacts of the work performed?
    \item[] Answer: \answerYes{} 
    \item[] Justification: We discuss both potential positive societal impacts and negative societal impacts of our work, provided in {\color{blue}Appendix~\ref{appendix:limitimpact}}.
    \item[] Guidelines:
    \begin{itemize}
        \item The answer NA means that there is no societal impact of the work performed.
        \item If the authors answer NA or No, they should explain why their work has no societal impact or why the paper does not address societal impact.
        \item Examples of negative societal impacts include potential malicious or unintended uses (e.g., disinformation, generating fake profiles, surveillance), fairness considerations (e.g., deployment of technologies that could make decisions that unfairly impact specific groups), privacy considerations, and security considerations.
        \item The conference expects that many papers will be foundational research and not tied to particular applications, let alone deployments. However, if there is a direct path to any negative applications, the authors should point it out. For example, it is legitimate to point out that an improvement in the quality of generative models could be used to generate deepfakes for disinformation. On the other hand, it is not needed to point out that a generic algorithm for optimizing neural networks could enable people to train models that generate Deepfakes faster.
        \item The authors should consider possible harms that could arise when the technology is being used as intended and functioning correctly, harms that could arise when the technology is being used as intended but gives incorrect results, and harms following from (intentional or unintentional) misuse of the technology.
        \item If there are negative societal impacts, the authors could also discuss possible mitigation strategies (e.g., gated release of models, providing defenses in addition to attacks, mechanisms for monitoring misuse, mechanisms to monitor how a system learns from feedback over time, improving the efficiency and accessibility of ML).
    \end{itemize}
    
\item {\bf Safeguards}
    \item[] Question: Does the paper describe safeguards that have been put in place for responsible release of data or models that have a high risk for misuse (e.g., pretrained language models, image generators, or scraped datasets)?
    \item[] Answer: \answerNA{} 
    \item[] Justification: This paper poses no such risks.
    \item[] Guidelines:
    \begin{itemize}
        \item The answer NA means that the paper poses no such risks.
        \item Released models that have a high risk for misuse or dual-use should be released with necessary safeguards to allow for controlled use of the model, for example by requiring that users adhere to usage guidelines or restrictions to access the model or implementing safety filters. 
        \item Datasets that have been scraped from the Internet could pose safety risks. The authors should describe how they avoided releasing unsafe images.
        \item We recognize that providing effective safeguards is challenging, and many papers do not require this, but we encourage authors to take this into account and make a best faith effort.
    \end{itemize}

\item {\bf Licenses for existing assets}
    \item[] Question: Are the creators or original owners of assets (e.g., code, data, models), used in the paper, properly credited and are the license and terms of use explicitly mentioned and properly respected?
    \item[] Answer: \answerYes{} 
    \item[] Justification: We carefully cited the original papers for existing assets and included their licenses in {\color{blue}Appendix~\ref{sup:license}}. 
    \item[] Guidelines:
    \begin{itemize}
        \item The answer NA means that the paper does not use existing assets.
        \item The authors should cite the original paper that produced the code package or dataset.
        \item The authors should state which version of the asset is used and, if possible, include a URL.
        \item The name of the license (e.g., CC-BY 4.0) should be included for each asset.
        \item For scraped data from a particular source (e.g., website), the copyright and terms of service of that source should be provided.
        \item If assets are released, the license, copyright information, and terms of use in the package should be provided. For popular datasets, \url{paperswithcode.com/datasets} has curated licenses for some datasets. Their licensing guide can help determine the license of a dataset.
        \item For existing datasets that are re-packaged, both the original license and the license of the derived asset (if it has changed) should be provided.
        \item If this information is not available online, the authors are encouraged to reach out to the asset's creators.
    \end{itemize}

\item {\bf New Assets}
    \item[] Question: Are new assets introduced in the paper well documented and is the documentation provided alongside the assets?
    \item[] Answer: \answerYes{} 
    \item[] Justification: Our code is publicly available at~\url{https://github.com/Necolizer/CHASE} with MIT license.
    \item[] Guidelines:
    \begin{itemize}
        \item The answer NA means that the paper does not release new assets.
        \item Researchers should communicate the details of the dataset/code/model as part of their submissions via structured templates. This includes details about training, license, limitations, etc. 
        \item The paper should discuss whether and how consent was obtained from people whose asset is used.
        \item At submission time, remember to anonymize your assets (if applicable). You can either create an anonymized URL or include an anonymized zip file.
    \end{itemize}

\item {\bf Crowdsourcing and Research with Human Subjects}
    \item[] Question: For crowdsourcing experiments and research with human subjects, does the paper include the full text of instructions given to participants and screenshots, if applicable, as well as details about compensation (if any)? 
    \item[] Answer: \answerNA{} 
    \item[] Justification: This paper does not involve crowdsourcing nor research with human subjects.
    \item[] Guidelines:
    \begin{itemize}
        \item The answer NA means that the paper does not involve crowdsourcing nor research with human subjects.
        \item Including this information in the supplemental material is fine, but if the main contribution of the paper involves human subjects, then as much detail as possible should be included in the main paper. 
        \item According to the NeurIPS Code of Ethics, workers involved in data collection, curation, or other labor should be paid at least the minimum wage in the country of the data collector. 
    \end{itemize}

\item {\bf Institutional Review Board (IRB) Approvals or Equivalent for Research with Human Subjects}
    \item[] Question: Does the paper describe potential risks incurred by study participants, whether such risks were disclosed to the subjects, and whether Institutional Review Board (IRB) approvals (or an equivalent approval/review based on the requirements of your country or institution) were obtained?
    \item[] Answer: \answerNA{} 
    \item[] Justification: This paper does not involve crowdsourcing nor research with human subjects.
    \item[] Guidelines:
    \begin{itemize}
        \item The answer NA means that the paper does not involve crowdsourcing nor research with human subjects.
        \item Depending on the country in which research is conducted, IRB approval (or equivalent) may be required for any human subjects research. If you obtained IRB approval, you should clearly state this in the paper. 
        \item We recognize that the procedures for this may vary significantly between institutions and locations, and we expect authors to adhere to the NeurIPS Code of Ethics and the guidelines for their institution. 
        \item For initial submissions, do not include any information that would break anonymity (if applicable), such as the institution conducting the review.
    \end{itemize}

\end{enumerate}

\end{document}